\renewcommand{\b}[1]{\boldsymbol{#1}}
\newcommand{\bet}{\b{\eta}}
\newcommand{\wt}{\tilde}
\newcommand{\wh}{\widehat}
\newcommand{\mc}[1]{\mathcal{#1}}
\newcommand{\D}{P}
\renewcommand{\H}{\mathcal{H}}
\newcommand{\radem}{\mc{R}}
\newcommand{\dist}{d} %
\newcommand{\distf}{\rho}
\newcommand{\marg}{\gamma}
\mathchardef\mhyphen="2D
\newcommand{\trun}[2]{{\operatorname{T}}_{[{#1},{#2}]}}
\newcommand{\TPhF}{\H}  %
\newcommand{\lipscore}{F}  %
\newcommand{\samp}{S}
\newcommand{\mist}{m}
\newcommand{\loss}{\mathcal{L}}
\newcommand{\norm}[1]{\| #1 \|}
\newcommand{\lp}{\left(}
\newcommand{\rp}{\right)}
\renewcommand{\a}{\alpha}
\newcommand{\e}{\eps}
\newcommand{\Lip}[1]{\nrm{#1}_{\textrm{{\tiny \textup{Lip}}}}}
\newcommand{\Lipts}[1]{\tsnrm{#1}_{\textrm{{\tiny \textup{Lip}}}}}
\newcommand{\ddim}{\operatorname{ddim}}
\newcommand{\argmin}{\mathop{\mathrm{argmin}}}
\newcommand{\argmax}{\mathop{\mathrm{argmax}}}
\newcommand{\natr}{_{\textrm{{\tiny \textup{Nat}}}}}
\newcommand{\snatr}{_{
\marg
\textrm{{\tiny \textup{Nat}}}
}}
\newcommand{\vc}{_{\textrm{{\tiny \textup{VC}}}}}
\newcommand{\cutoff}{_{\textrm{{\tiny \textup{cutoff}}}}}
\newcommand{\margin}{_{\textrm{{\tiny \textup{margin}}}}}
\newcommand{\Rad}{_{\textrm{{\tiny \textup{Rad}}}}}
\newcommand{\fnn}{f_{\textrm{{\tiny \textup{NN}}}}}
\newcommand{\gnn}{g_{\textrm{{\tiny \textup{NN}}}}}
\newcommand{\tmoo}{\trun{\mhyphen1}{1}}
\newcommand{\X}{\calX}
\newcommand{\Y}{\mathcal{Y}}
\renewcommand{\H}{\mathcal{H}}
\newcommand{\F}{\mathcal{F}}
\newcommand{\chr}{\mathds{1}}
\newcommand{\pred}[1]{\chr_{\left\{ #1 \right\}}}
\newcommand{\E}{\mathbb{E}}
\newcommand{\fat}{\mathrm{fat}}
\newcommand{\diam}{\operatorname{diam}}
\renewcommand{\P}{\mathbb{P}}
\newcommand{\ben}{\begin{enumerate}}
\newcommand{\een}{\end{enumerate}}
\newcommand{\bit}{\begin{itemize}}
\newcommand{\eit}{\end{itemize}}
\def\clap#1{\hbox to 0pt{\hss#1\hss}}
\newcommand{\tsabs}[1]{| #1 |}
\newcommand{\nrm}[1]{\left\Vert #1 \right\Vert}
\newcommand{\tsnrm}[1]{\Vert #1 \Vert}
\newcommand{\pl}[1]{\paren{#1}_{\!+}}
\newcommand{\calX}{\mathcal{X}}
\newcommand{\R}{\mathbb{R}}
\newcommand{\N}{\mathbb{N}}
\newcommand{\beq}{\begin{eqnarray*}}
\newcommand{\eeq}{\end{eqnarray*}}
\newcommand{\beqn}{\begin{eqnarray}}
\newcommand{\eeqn}{\end{eqnarray}}
\newcommand{\paren}[1]{\left( #1 \right)}
\newcommand{\sqprn}[1]{\left[ #1 \right]}
\newcommand{\tlprn}[1]{\left\{ #1 \right\}}
\newcommand{\set}[1]{\tlprn{#1}}
\newcommand{\abs}[1]{\left| #1 \right|}
\newcommand{\ceil}[1]{\ensuremath{\left\lceil#1\right\rceil}}
\newcommand{\gn}{\, | \,}
\newcommand{\ds}{\displaystyle}
\newcommand{\hide}[1]{}
\newcommand{\oo}[1]{\frac{1}{#1}}
\def\eps{\varepsilon}
\newtheorem{theorem}{Theorem}
\newtheorem{lemma}[theorem]{Lemma}
\newcommand{\bepf}{\begin{proof}}
\newcommand{\enpf}{\end{proof}}
\newcommand{\veta}{\b{\eta}}
\def\eps{\varepsilon}
\title{
Maximum Margin Multiclass Nearest Neighbors
}
\author{
Aryeh Kontorovich
\and
Roi Weiss
}
\begin{document}
\maketitle
\begin{abstract}
We develop a general framework for margin-based multicategory classification in metric spaces.
The basic work-horse is a margin-regularized version of the nearest-neighbor classifier.
We prove
generalization bounds
that 
match the state of the art in sample size $n$ and significantly 
improve the dependence on the number of classes $k$.
Our point of departure is
a nearly Bayes-optimal 
finite-sample
risk bound 
independent of $k$.
Although $k$-free, 
this bound is unregularized and non-adaptive,
which motivates our main result:
Rademacher and scale-sensitive 
margin
bounds with a logarithmic dependence on $k$.
As the best previous risk estimates 
in this setting
were of order $\sqrt k$, 
our bound is exponentially sharper.
From the algorithmic standpoint,
in doubling metric spaces
our classifier may be trained
on $n$ examples in 
$O(n^2\log n)$ 
time and evaluated on new points in $O(\log n)$ time.
\end{abstract} 

\section{Introduction}
Whereas the theory of supervised binary classification is by now fairly well 
developed,
its multiclass extension 
continues to pose
numerous novel statistical and computational challenges.
On the algorithmic front, there is the basic question of
how to adapt the hyperplane and kernel methods ---  
ideally suited for
two classes --- to three or more.
A host of new problems also arises on the statistical front. In the binary case,
the VC-dimension characterizes the distribution-free sample complexity \citep{MR1741038} and tighter distribution-dependent
bounds are available via Rademacher techniques \citep{DBLP:journals/jmlr/BartlettM02,MR1892654}. 
Characterizing the multiclass distribution-free sample complexity is far less straightforward, though impressive progress
has been recently made \citep{DBLP:journals/jmlr/DanielySBS11}.

Following \citet{DBLP:journals/jmlr/LuxburgB04,DBLP:conf/colt/GottliebKK10},
we
adopt a proximity-based approach to supervised multicategory classification in metric spaces.
The principal motivation for this framework is two-fold:
\begin{itemize}
\item[(i)] Many natural metrics, such as $L_1$, earthmover, and edit distance cannot be embedded in a Hilbert space without a large distortion
\citep{Enflo69,NS07,AK10}.
Any kernel method is thus a priori at a disadvantage 
when learning to classify non-Hilbertian objects,
since it cannot faithfully represent the data geometry.
\item[(ii)] Nearest neighbor-based classification 
sidesteps the issue of
$k$-to-binary reductions --- which, despite voluminous research,
is still the subject of vigorous debate \citep{DBLP:journals/jmlr/RifkinK03,ElYaniv20081954}.
In terms of time complexity,
the reductions approach faces an
$\Omega(k)$ information-theoretic lower bound
\citep{DBLP:conf/alt/BeygelzimerLR09},
while nearest neighbors admit solutions whose runtime does not depend on
the number of classes.
\end{itemize}
\paragraph{Main results.} Our contributions are both statistical and  
algorithmic in nature. On the statistical front, 
we open with the observation that the 
nearest-neighbor classifier's
expected risk is at most twice the Bayes optimal plus a term that decays with
sample size at a rate {\em not dependent} on the number of classes $k$ 
(and continues to hold for $k=\infty$, Theorem~\ref{thm:bayes}).
Although of interest as 
apparently
the first ``$k$-free'' finite-sample result,
it has the drawback of being 
{\em non-adaptive} in the sense of depending on 
properties of the unknown 
sampling distribution
and failing to provide the learner with a usable data-dependent bound.
This difficulty is overcome in our
main technical contribution (Theorems~\ref{thm:radem} and \ref{thm:scale}),
where we give a margin-based multiclass bound of 
order 
\beqn
\label{eq:intro_bound}
\min \left\{  \oo{\marg}\lp \frac{\log k}{n} \rp^{\frac{1}{D+1}},  \oo{\marg^{\frac{D}{2}}}  \lp \frac{\log k}{n} \rp^{\oo{2}} \right\},
\eeqn
where $k$ is the number of classes, $n$ is sample size, $D$ is the doubling dimension of the metric instance space and $0<\marg\le1$ is the margin.
This matches the state of the art asymptotics in $n$ for metric spaces and significantly improves the
dependence on $k$, which 
hitherto was of order $\sqrt k$ \citep{zhang2002covering, zhang2004statistical} or worse.
The exponential dependence on 
some covering dimension (such as $D$)
is in general inevitable, as shown by a standard no-free-lunch argument
\citep{shai2014}, but whether (\ref{eq:intro_bound}) is optimal remains an open question.

On the algorithmic front, using the above bounds, we show how to efficiently perform 
Structural Risk Minimization (SRM) 
so as to avoid
overfitting. 
This involves deciding {\em how many} and {\em which} sample points one is allowed to err on.
We reduce this problem
to minimal
vertex cover, which admits
a greedy $2$-approximation. 
Our algorithm
admits a significantly faster $\eps$-approximate version  
in doubling spaces
with a graceful degradation 
in $\eps$
of the generalization bounds,
based on approximate nearest neighbor techniques developed by 
\citet{DBLP:conf/colt/GottliebKK10,DBLP:conf/simbad/GottliebKK13}.
For a fixed doubling dimension and $\eps$,
our runtime 
is
$O(n^2\log n)$
for 
learning
and 
$O(\log n)$ for evaluation on a test point.
(Exact nearest neighbor requires $\Theta(n)$ evaluation time.)
Finally, our generalization bounds and algorithm can be made adaptive to the intrinsic dimension
of the data 
via a recent metric dimensionality-reduction technique \citep{gkr2013-alt}.
\paragraph{Related work.}
Due to space constraints, we are only able to mention the most directly relevant results --- 
and even these, not in full generality but rather with an eye to facilitating comparison to the present work.
Supervised $k$-category classification approaches follow two basic paradigms:
{\bf(I)} defining a score function on point-label pairs and classifying by choosing the label with the 
optimal score and
{\bf(II)} reducing the problem to several binary classification problems.
Regarding the second paradigm,
the seminal paper of \citet{allwein2001reducing} unified the various error correcting
output code (ECOC)-based multiclass-to-binary reductions under a single margin-based framework.
Their generalization bound requires the base classifier to have VC-dimension $d\vc<\infty$ 
(and hence does not apply to nearest neighbors or infinite-dimensional Hilbert spaces) and is of the form 
$\wt{O}\Big(\frac{\log k}{\marg}\sqrt{\frac{d\vc}{n}}\Big)$.
\citet{DBLP:conf/colt/LangfordB05,DBLP:conf/alt/BeygelzimerLR09} gave 
$k$-free and $O(\log k)$
regret bounds,
but these
are conditional on the performance of the underlying binary classifiers 
as opposed to the unconditional bounds we provide in this paper.

As for the first paradigm, proximity is perhaps the most natural score function ---
and indeed, a formal analysis of the 
nearest neighbor classifier \citep{cover1967nearest} much predated the first multiclass extensions of 
SVM \citep{weston1999support}.
\citet{crammer2002algorithmic,crammer2002learnability} considerably reduced the computational complexity of the 
latter approach and gave a risk bound decaying as $\wt{O}(k^2/n\marg^2)$, 
for the separable case with margin $\marg$.
In an alternative approach based on choosing $q$ prototype examples, 
\citet{crammer2002margin} gave a risk bound with rate $\wt{O}(q^{k/2}/\marg\sqrt n)$.
\citet{MR1322634} characterized the PAC learnability of $k$-valued functions in
terms of combinatorial dimensions, such as the Natarajan dimension $d\natr$.
\citet{MR2383568,MR2745296} gave scale-sensitive analogues of these dimensions.
He gave a risk bound decaying as
$\wt{O}\paren{\frac{\log k}{\marg}\sqrt {d\snatr/n}}$,
where $d\snatr$ is a scale-sensitive Natarajan dimension ---
essentially replacing the finite VC dimension $d\vc$ in 
\citet{allwein2001reducing} by $d\snatr$. 
He further showed that for linear function classes in Hilbert spaces, $d\snatr$ is bounded by $\wt{O} ( k ^ 2/ \marg^2)$, 
resulting in a risk bound decaying as $\wt{O}({k}/{\marg^2\sqrt{n}})$.
To the best of our knowledge,
the sharpest current estimate on the Natarajan dimension
(for some special function classes) 
is $d\natr = \wt{O}(k)$ 
with a matching lower bound of $\Omega(k)$ \citep{DBLP:journals/jmlr/DanielySBS11}.
A margin-based Rademacher analysis 
of score functions
\citep{mohri-book2012}
yields a bound of order $\wt{O}(k^2/\marg\sqrt n)$,
and this is also the $k$-dependence obtained by
\citet{icml2013_cortes13} 
in a recent paper
proposing a multiple kernel approach to multiclass learning.
Closest in spirit to our work are the results of
\citet{zhang2002covering,zhang2004statistical},
who used the chaining technique to achieve a Rademacher complexity
with asymptotics
$\wt{O}\Big(\frac{1}{\marg}\sqrt{\frac{k}{n}}\Big)$. 

Besides the 
dichotomy of
score functions vs. 
multiclass-to-binary reductions 
outlined above, 
multicategory risk bounds 
may also be grouped
by
the trichotomy
of
{\bf(a)} combinatorial dimensions {\bf(b)} Hilbert spaces {\bf(c)} metric spaces
(see Table~\ref{table:comp_bounds}).
Category (a)
is comprised of algorithm-independent results that
give generalization bounds in terms of some combinatorial dimension of a fixed concept class
\citep{allwein2001reducing,MR1322634,MR2383568,MR2745296,DBLP:journals/jmlr/DanielySBS11}.
Multiclass extensions of SVM and related kernel methods
\citep{weston1999support,crammer2002algorithmic,crammer2002learnability,crammer2002margin,icml2013_cortes13} 
fall into category (b).
Category (c), consisting of agnostic\footnote{
in the sense of not requiring an a priori fixed concept class
} metric-space methods is the most sparsely populated.
The pioneering asymptotic analysis of 
\citet{cover1967nearest} was cast in a
modern, finite-sample version by 
\citet{shai2014}, but only for binary classification.
Unlike Hilbert spaces, which admit dimension-free margin bounds,
we are not aware of any metric space risk bound that does not explicitly depend on some metric
dimension $D$ or covering numbers. The bounds in
\citet{shai2014,gkr2013-alt}
exhibit
a characteristic ``curse of dimensionality''
decay rate of $O(n^{-1/(D+1)})$,
but more optimistic asymptotics can be obtained
\citep{MR2383568,MR2745296,zhang2002covering,zhang2004statistical,DBLP:conf/colt/GottliebKK10}.
Although some sample lower bounds 
for proximity-based methods
are known \citep{shai2014}, 
the optimal dependence on $D$ and $k$
is far from being fully understood.

\begin{table}
\begin{tabular}{l|l|l}
\rule{0pt}{2ex} Paper                 &\hspace{-2pt}decay rate \hspace{-2pt}\scriptsize{ $\wt{O}(\cdot)$}\hspace{-3pt}&\hspace{-2pt}group\hspace{-3pt}\\
[2pt] \hline
\rule{0pt}{3ex}\citet{allwein2001reducing}$^\ddagger$ & $\frac{\log k}{\marg}\sqrt{\frac{d\vc}{n}}$ & (II,a) \\
\rule{0pt}{3ex}\citet{DBLP:journals/jmlr/DanielySBS11}$^{*\dagger\ddagger}$ & $ \frac{d\natr \log k }{n}$ & (I,a)\\
\rule{0pt}{3ex}\citet{MR2745296}$^\ddagger$    & $ \frac{\log k}{\marg}\sqrt{\frac{d\snatr }{n} } $ & (I,a)\\
\rule{0pt}{3ex}\citet{crammer2002learnability}$^\dagger$ & $\frac{k^2}{\marg^2 n}$ & (I,b) \\
\rule{0pt}{3ex}\citet{icml2013_cortes13} & $ \frac{k^2}{\marg\sqrt n} $ & (I,b)\\
\rule{0pt}{3ex}\citet{MR2745296}    & $ \frac{k}{\marg^2\sqrt n}$ & (I,b)\\
\rule{0pt}{3ex}\citet{zhang2004statistical} & $ \oo{\marg}\sqrt{\frac{k}{n}}$ & (I,b)\\
\rule{0pt}{3ex}current paper & $\oo{\marg^{D/2}} \sqrt\frac{\log k}{n}$ & (I,c)\\
\rule{0pt}{3ex}current paper & $\oo{\marg}{\lp\frac{\log k}{n}\rp^{\oo{1+D}}}$ & (I,c)\\
[5pt]                   
\end{tabular}
\caption{Comparing various multiclass bounds. {\scriptsize $(^*)$ Not margin-based. $(^\dagger)$ Only for the separable case.
$(^\ddagger)$ Combinatorial dimension depends on $k$.}
}
\label{table:comp_bounds}
\end{table}
\section{Preliminaries}
\label{sec:prelim}
\paragraph{Metric Spaces.} Given two metric spaces $(\X,\dist)$ and $(\mc{Z},\distf)$,
a function $f: \X \to \mc{Z}$ is called $L$-Lipschitz if
\(
\distf(f(x),f(x')) \leq L \dist(x,x')
\)
for all $x,x'\in\X$. 
(The real line $\R$ is always considered with its Euclidean metric $\abs{\cdot}$.)
The Lipschitz  constant of $f$, denoted $\Lip{f}$, 
is the smallest $L$ for which $f$ is $L$-Lipschitz.
The distance between two sets $A,B\subset\X$ is defined by $\dist(A,B)=\inf_{x\in A,x'\in B}\dist(x,x')$.
For a metric space $(\X, \dist)$, let $\lambda$ be the smallest value such that every ball in $\X$ can be
covered by $\lambda$ balls of half the radius. 
The {\em doubling dimension} of $\X$ is $\ddim(\X):=\log_2 \lambda$. A metric is
{\em doubling} when its doubling dimension is bounded.
The $\e$-covering number of a metric space $(\X,\dist)$, denoted $\mc{N}(\e,\X,\dist)$, is
defined as the smallest number of balls of radius $\e$ that suffices to cover $\X$.
It can be shown 
(e.g., \citet{KL04})
that 
\beqn
\label{eq:X_cov_Num}
\mc{N}(\e,\X,\dist) \leq \lp \frac{2\diam(\X)}{\e} \rp^{\ddim(\X)},
\eeqn
where ${\ds \diam(\X)=\sup_{x,x'\in\X}\dist(x,x')}$ is the diameter of $\X$.

\paragraph{The multiclass learning framework.} 
Let $(\X,\dist)$ be a metric instance space with $\diam(\X) = 1$,
$\ddim(\X)=D<\infty$,
 and $\Y\subseteq\N$ an at most
countable {label} set.
We observe a sample
$\samp=\paren{X_i,Y_i}_{i=1}^n \in \{\X\times \Y\}^n$ drawn iid from an unknown distribution $\D$ 
over $\X\times\Y$.

In line with paradigm (I) outlined in the Introduction, our classification procedure consists of 
optimizing a score function. In hindsight, the score at a 
test
point will be determined by its labeled neighbors,
but for now, we consider an unspecified collection $\F$ of functions mapping $\X\times\Y$ to $\R$.
A score function $f\in\F$ induces the classifier $g_f:\X\to\Y$
 via
\beqn
\label{eq:g_f}
g_f(x) = \argmax_{y\in \Y} f(x,y),
\eeqn
breaking ties arbitrarily. 
The {\em margin} of $f\in\F$ on $(x,y)$ is defined by
\beqn
\label{eq:margin_fun}
\marg_f(x,y) = \oo{2} \lp f(x,y) - \sup_{y' \neq y} f(x,y') \rp.
\eeqn
Note that $g_f$ 
misclassifies $(x,y)$ precisely when $\marg_f(x,y)<0$. 
One of our main objectives is to upper-bound
the generalization error 
\beq
\P(g_f(X) \neq Y) = \E[\pred{\marg_f(X,Y)<0}].
\eeq
To this end, we introduce two surrogate
loss functions $\loss:\R\to\R_+$:
\beq
\loss\cutoff(u) &=& \pred{u<1} \\
\loss\margin(u) &=& \trun{0}{1}(1-u),
\eeq
where
\beqn
\label{eq:T}
\trun{a}{b}(z) = \max\set{a,\min\set{b,z}}
\eeqn
is the truncation operator.
The empirical loss $\wh\E[\loss(\marg_f)]$ induced by 
any of the loss functions above is
\(
\frac{1}{n} \sum_{i=1}^n \loss(\marg_f(X_i,Y_i)).
\)
All probabilities $\P(\cdot)$ and expectations $\E[\cdot]$ are with
respect to the sampling distribution $\D$. We will write $\E_\samp$
to indicate expectation over a sample (i.e., over $\D^n$).

\section{Risk bounds}
In this section we analyze the statistical properties of 
nearest-neighbor
multicategory classifiers in metric spaces.
In Section~\ref{sec:bayes}, Theorem \ref{thm:bayes},
we 
record the observation that 
the 1-nearest neighbor classifier is
nearly
Bayes optimal,
with a risk decay that does not depend on the number of classes $k$. 
Of course, the 1-naive nearest neighbor is well-known to overfit.
This is reflected in the non-adaptive nature of the analysis:
the bound is stated in terms of properties of the unknown sampling distribution, 
and fails to provide the learner with a usable data-dependent bound.

To achieve the latter goal, we develop 
a margin analysis
in Section~\ref{sec:margin}.
Our main technical result
is Lemma~\ref{lem:covNum},
from which the logarithmic dependence on $k$ claimed in (\ref{eq:intro_bound}) follows.
Although not $k$-free like the Bayes excess risk bound of Theorem~\ref{thm:bayes},
$O(\log k)$ is
exponentially sharper than the current state of the art
\citep{zhang2002covering, zhang2004statistical}.
Whether a $k$-free metric entropy bound is possible is currently left as an open problem.

The metric entropy bound of Lemma~\ref{lem:covNum} facilitates two approaches to bounding the risk:
via Rademacher complexity (Section~\ref{sec:rademacher}) and via scale-sensitive techniques in the spirit of \citet{MR2383568}
(Section~\ref{sec:scale-sensitive}).
In Section~\ref{sec:main-theorem} we combine these two margin bounds by taking their minimum. The resulting bound will be used in Section~\ref{sec:alg} to perform efficient Structural Risk Minimization.
\subsection{Multiclass Bayes near-optimality}
\label{sec:bayes}
In this section, $(\X,\dist)$ is a metric space
and $\Y$ is an at most countable (possibly infinite) label set.
A sample
$\samp=\paren{X_i,Y_i}_{i=1}^n$ 
is drawn iid from an unknown distribution 
$\D$ 
over $\X\times\Y$.
For $x \in \X$ let $(X_{\pi_1(x)},Y_{\pi_1(x)})$ be its nearest neighbor in $S$:
\[
\pi_1(x) = \argmin_{i\in[n]} \dist(X_i,x).
\]
Thus, the nearest-neighbor classifier $\gnn$ is given by
\beqn
\label{eq:hnn-def}
\gnn(x) = Y_{\pi_1(x)}.
\eeqn
Define the function $\veta:\X\to\R^\Y$ by $$\veta(x)=\P(Y=\cdot\gn X=x).$$
The {\em Bayes optimal} classifier $g^*$ --- i.e., one that
minimizes $\P(g(X)\neq Y)$ over all measurable $g\in\Y^\X$ 
---
is well-known to have the form
\beq
g^*(x) = \argmax_{y\in\Y}\eta_y(x),
\eeq
where ties are broken arbitrarily.
Our only distributional assumption is that $\veta$ is $L$-Lipschitz with respect to the sup-norm.
Namely, for all $x,x'\in\X$, we have
\beq
\nrm{\veta(x)-\veta(x')}_\infty 
\equiv \sup_{y\in\Y}\abs{\eta_y(x)-\eta_y(x')}
\le L\dist(x,x').
\eeq
This is a direct analogue of the Lipschitz assumption for the binary case \citep{cover1967nearest,shai2014}.
We make the additional standard assumption that $\X$ has a finite doubling dimension: $\ddim(\X)=D<\infty$.
The Lipschitz and doubling assumptions are sufficient to extend
the finite-sample analysis of binary nearest neighbors \citep{shai2014}
to the multiclass case:
\begin{theorem} 
\label{thm:bayes}
\beq
\E_\samp\sqprn{\P(\gnn(X)\neq Y)}
\le
2\P(g^*(X)\neq Y)
+
\frac{4L}{ n^{{1}/{(D+1)}}}.
\eeq
\end{theorem}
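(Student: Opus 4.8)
The plan is to reduce the multiclass statement to a pointwise argument about the nearest-neighbor label, and then integrate. Fix a test point $x$ and condition on the sample. The key quantity is the probability, over the draw of $Y \sim \veta(x)$ and $Y_{\pi_1(x)} \sim \veta(X_{\pi_1(x)})$ (conditionally independent given $x$ and the sample), that $\gnn(x) = Y_{\pi_1(x)} \neq Y$. First I would bound this conditional error by the corresponding error with the ``true'' conditional $\veta(x)$ in place of $\veta(X_{\pi_1(x)})$, paying a penalty controlled by $\nrm{\veta(x) - \veta(X_{\pi_1(x)})}_\infty \le L\,\dist(x, X_{\pi_1(x)})$ via the Lipschitz assumption. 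Concretely, if two label distributions on $\Y$ are within $\delta$ in sup-norm, the probability that an independent draw from each disagrees changes by $O(\delta)$ — but one has to be slightly careful because $\Y$ may be countably infinite, so the sup-norm closeness does not immediately give total-variation closeness. The clean way around this is to compare against the Bayes classifier directly: $\P(Y \neq Y' \mid x) \le 2\,\P(Y \neq g^*(x) \mid x) + (\text{correction})$, where both $Y, Y'$ use the true $\veta(x)$; this is the standard ``nearest neighbor is at most twice Bayes'' inequality, and it only uses the finite second-moment-type identity $\P(Y \neq Y') = 1 - \sum_y \eta_y(x)^2$ together with $\P(Y \neq g^*(x)) = 1 - \max_y \eta_y(x)$ and the elementary bound $1 - \sum_y p_y^2 \le 2(1 - \max_y p_y)$, which holds for any probability vector regardless of its length.

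Next I would handle the mismatch between $\veta(X_{\pi_1(x)})$ and $\veta(x)$. Writing $A$ for the event $\{\gnn(x) \neq Y\}$, I split $\P(A \mid x, S)$ according to whether $\dist(x, X_{\pi_1(x)})$ is small or large relative to a threshold $t$. On the event $\dist(x, X_{\pi_1(x)}) \le t$, the two conditional label distributions are within $Lt$ in sup-norm, and I can couple $Y_{\pi_1(x)}$ with a draw from $\veta(x)$ so that they agree except on an event whose probability I bound; combining with the twice-Bayes inequality above gives a conditional error of at most $2\,\P(Y \neq g^*(x) \mid x) + O(Lt)$. On the event $\dist(x, X_{\pi_1(x)}) > t$, I just bound the conditional error by $1$ and pay the probability of that event. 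Taking expectations over $x$ and $S$, the first piece integrates to $2\,\P(g^*(X) \neq Y) + O(Lt)$, and the second piece is $O(1) \cdot \P_{x,S}(\dist(x, X_{\pi_1(x)}) > t)$.

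The main obstacle — and the place where the doubling dimension enters — is bounding $\E_{S}\,\P_x(\dist(x, X_{\pi_1(x)}) > t)$, i.e., showing that a random test point is within $t$ of its nearest sample point with high probability. This is exactly the covering argument from the binary analysis of \citet{shai2014}: cover $\X$ by $\mc{N}(t/2, \X, \dist) \le (2/t)^D$ balls of radius $t/2$ (using $\diam(\X)=1$ and \eqref{eq:X_cov_Num}); a test point whose nearest sample point is farther than $t$ must lie in a ball containing no sample point, and the expected mass of ``empty'' balls is at most $\mc{N}(t/2,\X,\dist)/n$ by a standard occupancy bound (each ball of probability mass $q$ is empty with probability $(1-q)^n \le e^{-qn} \le 1/(qn)$, and summing $q_j \cdot \min\{1, 1/(q_j n)\}$ over the cover is $O(\mc{N}/n)$). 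Hence $\E_{S}\,\P_x(\dist(x,X_{\pi_1(x)}) > t) \lesssim (2/t)^D / n$. Finally I would optimize over $t$: the total excess term is of order $Lt + (2/t)^D/n$, and choosing $t \asymp n^{-1/(D+1)}$ balances the two, yielding the claimed $4L\, n^{-1/(D+1)}$ after tracking constants carefully (the constant $4$ presumably absorbs the $2/t$ base of the covering number and the occupancy constant, with $\diam(\X)=1$). The only genuinely new point relative to the binary case is the observation that the twice-Bayes inequality and the coupling argument survive verbatim for countable $\Y$, since every inequality used is dimension-free in the label count.
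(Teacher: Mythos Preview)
Your overall strategy is sound and close to the paper's, but the coupling step in your second paragraph has a real gap. You correctly flag that sup-norm closeness of $\veta(x)$ and $\veta(X_{\pi_1(x)})$ does not control their total-variation distance when $\Y$ is countable --- and then you invoke precisely such a coupling (``couple $Y_{\pi_1(x)}$ with a draw from $\veta(x)$ so that they agree except on an event whose probability I bound''). The maximal-coupling disagreement probability \emph{is} the TV distance, which can be $\Theta(1)$ even when $\tsnrm{\veta(x)-\veta(x')}_\infty$ is arbitrarily small (e.g.\ $\veta(x)$ uniform on $\{1,\ldots,2N\}$, $\veta(x')$ uniform on $\{1,\ldots,N\}$). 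Your proposed ``clean way around'' via the twice-Bayes inequality $1-\sum_y p_y^2\le 2(1-\max_y p_y)$ only handles the case where both labels are drawn from the \emph{same} $\veta(x)$; it does not by itself bridge from $\veta(x')$ back to $\veta(x)$, so the gap remains.

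The paper closes this without any coupling, by a one-line direct computation: with $x'=X_{\pi_1(x)}$,
\[
\P(Y_{\pi_1(x)}\neq Y\mid x,x')=\sum_{j}\eta_j(x)\bigl(1-\eta_j(x')\bigr)\le\sum_{j}\eta_j(x)\bigl(1-\eta_j(x)+L\dist(x,x')\bigr)=\sum_j\eta_j(x)(1-\eta_j(x))+L\dist(x,x'),
\]
using only $\sum_j\eta_j(x)=1$; this is a $k$-free additive penalty of exactly $L\dist(x,x')$, after which your twice-Bayes inequality (which is also the paper's) finishes the pointwise bound. With this in hand the paper also dispenses with your threshold split: taking expectations gives $2\,\P(g^*(X)\neq Y)+L\,\E_{S,X}[\dist(X,X_{\pi_1(X)})]$, and the expected nearest-neighbor distance is bounded directly via the same covering/occupancy argument you sketch, yielding $\E[\dist]\le\eps+(2/\eps)^D/(en)$ and then $\eps=2n^{-1/(D+1)}$. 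Your thresholded route would also go through once the coupling is replaced by the display above, but the paper's version is shorter and makes the constant $4$ transparent.
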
 
Note that the bound is independent of the number of classes $k$ and holds
even for $k=\infty$. The proof is deferred to Appendix~\ref{app:bayes_proof}.

\subsection{Multiclass margin bounds}
\label{sec:margin}
Here again $(\X,\dist)$ is a metric space,
but now the
label set $\Y$ is assumed finite: $|\Y|=k<\infty$. 
As before,
$\samp=\paren{X_i,Y_i}_{i=1}^n$ 
with $(X_i,Y_i)\sim\D$ iid.
It will be convenient to write 
$\samp^y = \{ X_i : Y_i = y, i\in[n]\}$ 
for the subset of examples with label $y$.
The metric induces the natural score function
\(
\fnn(x,y) =  - \dist(x,\samp^{y})
\)
with corresponding 
nearest-neighbor classifier
\beqn
\label{eq:gnn}
\gnn(x) = \argmax_{y\in \Y} \fnn(x,y),
\eeqn
easily seen to be identical to the one in (\ref{eq:hnn-def}).
At this point we make the simple but crucial observation that
the function $\fnn(\cdot,y): \X\to\R$ is $1$-Lipschitz.
This will enable us to generalize the powerful Lipschitz extension framework of
\citet{DBLP:journals/jmlr/LuxburgB04} to 
$\abs{\Y}>2$.

We will need a few definitions.
Let $\lipscore_L$ be the collection of all $L$-Lipschitz functions from $\X$ to $\R$
and put $\F_L=\lipscore_L\times\Y$.
Since each $f\in\F_L$ maps $\X\times\Y$ to $\R$, the margin
$\marg_f(x,y)$ is well-defined via (\ref{eq:margin_fun}).
Putting 
\beq
y_f^*(x) &=& \argmax_{y\in\Y} f(x,y),\\
\marg_f^*(x) &=& \marg_f(x,y_f^*(x))
,
\eeq
we define the {\em projection}
$\Phi_f$:
\beq
\Phi_f(x,y) &=& \begin{cases} 
\phantom{-}
 \marg_f^*(x), &\mbox{if }  y = y_f^*(x)
\\
-\marg_f^*(x), & \mbox{otherwise}. 
\end{cases} 
\eeq
Finally, we define $\TPhF_L$
as the truncated (as in (\ref{eq:T})) projections of functions in $\F_L$:
\beqn
\label{eq:mainClass}
\TPhF_L = 
\set{
(x,y) \mapsto \trun{\mhyphen1}{1}\paren{\Phi_f(x,y)} : f\in\F_L
}
.
\eeqn
Thus, $\TPhF_L$ is the set of functions $h_f:\X\times\Y\to[-1,1]$,
where each $h_f(\cdot,y)$ is $L$-Lipschitz and 
$h_f(x,y)=\pm\trun{\mhyphen1}{1}(\marg_f^*(x))$, 
depending upon whether $y=y_f^*(x)
$
, see Figure \ref{fig:mapping_diag} (left).

\subsubsection{Bounding the metric entropy}
\label{sec:metric-entropy}
\begin{figure}
\centering{
\begin{tikzpicture}[scale=1,every node/.style={scale=1, minimum size = 1mm, inner sep = 1mm}, every edge/.style={draw, line cap =round,  thin}
]

{
\node (u)   at (0,0) {$1$}
edge [-] (-.5,0)
edge [-|] (0,-1)
edge [-|] (0,1)
edge [-, ultra thick] (0,-.7); 
\node (r)  at (1,0) {$2$} 
 edge [-|] (1,-1) 
 edge [-|] (1,1)
 edge [-, ultra thick] (1,.7)
 edge [-] (u); 
 \node (dl)  at (2,0) {$3$}
 edge [-] (r)
 edge [-|] (2,-1)
 edge [-|] (2,1)
 edge [-] (2.5,0)
 edge [-, ultra thick] (2,-.7); 
 \node at (1,1.3) {$y^*_f$};
 \path 
 (0,-.7) node (f1)  [circle, fill, inner sep = 0pt] {}
  (1,.7) node (f2)  [circle, fill, inner sep = 0pt] {}
  node [right = .5mm of f2] {$\marg^*_f$}
 (2,-.7) node (f3)  [circle, fill, inner sep = 0pt] {}
 node [right = .5mm of f3] {$\mhyphen\marg^*_f$}
 node [left = 1.5mm of f1] {$\mhyphen\marg^*_f$}
 ;
 
\coordinate (c) at (5,0) ;
\node (u) at (5,1.1) [inner sep=1mm] {$1$}
edge [|-] (5,0);
\node (dr) at (6.1,-1) [inner sep=1mm] {$2$}
 edge [|-] (5,0);
 \node (dl) at (3.9,-1) [inner sep=1mm] {$3$}
 edge [|-] (5,0);
 \path 
 (5.66,-.6) node (p2) [circle, fill, inner sep = 0pt] {}
 edge [ ultra thick, line cap =round] (c)
  node [ above right = 1.5mm and 1mm of p2, scale = 1, inner sep = 0pt] {$(y^*_f, \marg^*_f)$}
 ;
 \node [below = 2.02cm of u] {$\wt{h}(x)$};
 \node [below = 1cm of r] {$h_f(x,y)$};
 \node at (3.5, 0) {\Large $\Rightarrow$};
 }
\end{tikzpicture}
}
\caption{The mapping in Lemma \ref{lem:covNum}
with $\abs{\Y}=3$.}
\label{fig:mapping_diag}
\end{figure}

\begin{figure}
\centering{
\begin{tikzpicture}[scale=1,every node/.style={scale=1, minimum size = 0mm, inner sep =0pt, outer sep = 0pt}, every edge/.style={draw, line cap =round}]
{\small
\node (c) at (0,0) {};
\node (u) [inner sep = 1mm] at (0,1.1) {$1$}
edge [|-] (0,0);
\node (dr) [inner sep = 1mm] at (1.1,-1) {$2$}
 edge [|-] (0,0);
 \node (dl) [inner sep = 1mm] at (-1.1,-1) {$3$}
 edge [|-] (0,0);
 \path  (.22,-.2) node (p1) [circle, fill,  inner sep = 1pt]{}
 (.66,-.6) node (p2) [circle, fill, inner sep = 1pt] {}
 edge [ultra thick, line cap =round] (p1)
 (p2) node [ label={[ label distance=1mm]5:{\small $\wt{h}(x)$}}] {}
 (p1) node [inner sep = 0pt, label={[ label distance=1mm]85:$\wt{h}'(x)$}] {}
 ;
 \node [below = 1.8cm of u] {$y = y'$};
 
\coordinate (c) at (4,0) ;
\node (u) [inner sep = 1mm] at (4,1.1) {$1$}
edge [|-] (4,0);
\node (dr) [inner sep = 1mm] at (5.1,-1) {$2$}
 edge [|-] (4,0);
 \node (dl) [inner sep = 1mm] at (2.9,-1) {$3$}
 edge [|-] (4,0);
 \path  (4,.4) node (p1) [circle, fill,inner sep = 1pt]{}
 (4.66,-.6) node (p2) [circle, fill, inner sep = 1pt,outer sep = 0pt] {}
 edge [ultra thick, line cap =round] (c)
 (c) edge [ultra thick, line cap =round] (p1)
  (p2) node [inner sep = 0pt,outer sep = 0pt,label={[ label distance=1mm]20:$\wt{h}(x)$}] {}
 (p1) node [label={[ label distance=1.8mm]0:$\wt{h}'(x)$}] {}
 ;
 \node [below = 1.8cm of u] {$y \neq y'$};
 }
\end{tikzpicture}
}
\caption{The metric $\distf(\wt{h}(x),\wt{h}'(x))$
with $\abs{\Y}=3$.}
\label{fig:metric}
\end{figure}

Our main technical result is a bound on the metric entropy of
$\TPhF_L$, which will be used to obtain error bounds (Theorems \ref{thm:radem} and \ref{thm:scale}) 
for classifiers derived from this function class.
The analysis differs from previous bounds (see Table \ref{table:comp_bounds}) by explicitly taking advantage of the mutual exclusive nature of the labels, obtaining an exponential improvement in terms of the number of classes $k$.
Endow $\TPhF_L$ with the sup-norm
\[
\norm{\cdot}_\infty = \sup_{x\in\X} \max_{y\in\Y} \abs{\enskip\cdot\enskip}.
\]
\begin{lemma}
\label{lem:covNum}
For any $\eps>0$,
\beq
\log \mc{N}(\eps,\TPhF_L, \norm{\cdot}_\infty)
\leq
{\lp
\frac{16L}{\eps}
\rp^D
}
\log
\lp
\frac{5k}{\eps}
\rp.
\eeq
\end{lemma}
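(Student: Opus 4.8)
\emph{Plan.} The crucial structural fact is that, although each $h_f\in\TPhF_L$ maps into $[-1,1]^\Y$, the vector $h_f(x,\cdot)$ has essentially one degree of freedom: it equals $v_f(x):=\trun{\mhyphen1}{1}(\marg_f^*(x))\in[0,1]$ on the single coordinate $y_f^*(x)$ and $-v_f(x)$ on every other coordinate (note $v_f(x)\ge 0$ since $y_f^*(x)$ is an $\argmax$). Thus $h_f$ is faithfully encoded by the map $\wt h_f:\X\to\calK$, $\wt h_f(x)=(y_f^*(x),v_f(x))$, where $\calK$ is the \emph{$k$-pod}: $k$ unit segments glued at a common origin, metrized by the geodesic distance $\distf\big((y,v),(y',v')\big)=\abs{v-v'}$ if $y=y'$ and $v+v'$ otherwise. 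I would first record the reduction: a routine case split on whether $y_f^*(x)=y_{f'}^*(x)$ (cf.\ Figure~\ref{fig:metric}) gives the \emph{exact} pointwise identity $\max_{y\in\Y}\abs{h_f(x,y)-h_{f'}(x,y)}=\distf(\wt h_f(x),\wt h_{f'}(x))$, so $\mc{N}(\eps,\TPhF_L,\norm{\cdot}_\infty)=\mc{N}\big(\eps,\{\wt h_f:f\in\F_L\},\sup_x\distf\big)$. This is precisely what replaces the entropy of $[-1,1]^\Y$ (which would cost a factor $k$) by the entropy of $\calK$ (which costs only $\log k$).

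Next I would show each $\wt h_f$ is $2L$-Lipschitz from $(\X,\dist)$ into $(\calK,\distf)$. On the region where $y_f^*(x)=y_f^*(x')$ this is immediate, since the top value $\max_y f(\cdot,y)$ and the runner-up value are each $L$-Lipschitz, hence so is $\marg_f^*$ and therefore $v_f$. The case $y_f^*(x)\ne y_f^*(x')$ is the heart of the matter: I claim that then $\marg_f^*(x)\le L\dist(x,x')$ and, by symmetry, $\marg_f^*(x')\le L\dist(x,x')$, whence $\distf(\wt h_f(x),\wt h_f(x'))=v_f(x)+v_f(x')\le 2L\dist(x,x')$. Writing $a=y_f^*(x)$, $b=y_f^*(x')$, $d=\dist(x,x')$, this follows from $2\marg_f^*(x)\le f(x,a)-f(x,b)$ combined with $f(x,a)\le f(x',a)+Ld\le f(x',b)+Ld$ and $f(x,b)\ge f(x',b)-Ld$. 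This ``a winning label can change between nearby points only where the margin is tiny'' estimate is, I expect, the main obstacle, and it is exactly what the mutual exclusivity of the labels buys.

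With these two facts the lemma is bookkeeping via the standard Lipschitz-extension covering bound of \citet{DBLP:journals/jmlr/LuxburgB04}: any $2L$-Lipschitz map $\X\to\calK$ is determined, up to uniform $\distf$-error $\eps$, by its values on an $\big(\eps/(8L)\big)$-net of $\X$, each value quantized to an $(\eps/4)$-net of $\calK$. By (\ref{eq:X_cov_Num}) with $\diam(\X)=1$, the net of $\X$ has at most $(16L/\eps)^{D}$ points, and an elementary count (place points at $\tfrac{\eps}{4},\tfrac{3\eps}{4},\dots$ on each leg) shows $\calK$ admits an $(\eps/4)$-net of size at most $5k/\eps$. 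Hence the number of distinct quantized codes is at most $(5k/\eps)^{(16L/\eps)^{D}}$, each code pinning a set of $\sup_x\distf$-diameter $\le\eps$, and taking logarithms gives $\log\mc{N}(\eps,\TPhF_L,\norm{\cdot}_\infty)\le (16L/\eps)^{D}\log(5k/\eps)$.
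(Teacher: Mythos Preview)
Your proposal is correct and takes essentially the same approach as the paper: the $k$-pod encoding with the star metric $\distf$, the pointwise isometry $\max_y|h_f(x,y)-h_{f'}(x,y)|=\distf(\wt h_f(x),\wt h_{f'}(x))$, the Lipschitz property of $\wt h_f$, and the net-and-quantize covering count are exactly the paper's steps. Two small remarks: your $2L$-Lipschitz constant can be sharpened to the $L$ that the paper asserts (without proof) by \emph{adding} the inequalities $2\marg_f^*(x)\le f(x,a)-f(x,b)$ and $2\marg_f^*(x')\le f(x',b)-f(x',a)$ before bounding, which gives $\marg_f^*(x)+\marg_f^*(x')\le L\dist(x,x')$ directly; and your cell-counting argument cleanly sidesteps the paper's Lipschitz-extension step into the tree $\calK$ (for which the cited McShane--Whitney theorem, being about real-valued targets, does not literally apply).
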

\begin{proof}
By the definition of $\TPhF_L$, for all $h_f \in \TPhF_L$ and $x\in \X$ 
there is at most one $y\in\Y$ such that $h_f(x,y)>0$.
In addition, if 
$h_f(x,y) = c>0$,
then
$h_f(x,{y'}) = -c$ for all $y'\neq y$. 
Since $\marg_f^*(x)\ge0$,
we may reparametrize $h_f(x,y)$ by
$(y_f^*(x),\marg_f^*(x)) \in \Y \times [0,1]$,
see Figure \ref{fig:mapping_diag}.
To complete the mapping
$h_f \mapsto (y_f^*,\marg_f^*)$,
define the following star-like metric $\distf$ over $\Y \times [0,1]$ (see Figure \ref{fig:metric}):
\[
\distf( (y,\marg) , (y', \marg') ) = 
\left\{
\begin{array}{cl}
\abs{\marg-\marg'} & y = y' \\
{\marg} + {\marg'} & y \neq y'
\end{array}
\right..
\]
Let
$\wt{H}_L$ 
be the collection
of functions $\wt{h}: \X \to \Y \times [0,1]$ 
that are $L$-Lipschitz:
\[
\distf( \wt{h}(x), \wt{h}(x') )\leq L \dist(x,x'), \qquad x,x'\in\X.
\]
It is easily verified that the metric space $(\TPhF_L, \norm{\cdot}_\infty)$ is {isometric} to 
$(\wt{H}_L, \distf_\infty)$ with 
\[
\distf_\infty(\wt{h},\wt{h}') =  \sup_{x \in \X} \distf( \wt{h}(x),\wt{h}'(x) ).
\]
Thus, $\mc{N}(\eps,{\TPhF}_L, \norm{\cdot}_\infty)
=
\mc{N}(\eps,\wt{H}_L, \distf_\infty)$, and we proceed to bound the latter.\footnote{The
remainder of the proof is based on a technique
communicated to us by R. Krauthgamer,
a variant of the classic \citet{kolmogorov1959varepsilon} method.
}
Fix a covering of $\X$ consisting of $|N| = \mc{N} (\e/8L, \X , \dist)$ balls $\{U_1,\dots,U_{|N|}\}$ of radius $\e' = \e/8L$ 
and choose $|N|$ points $N = \{x_i \in U_i\}_{i=1}^{|N|}$.
Construct
$\wh{H}\subset\wt{H}_{2L}$
as follows.
At every point $x_i\in N$ select one of the classes $y\in\Y$ and set $\wh{h}(x_i) = (y, \marg(x_i) )$ 
with $\marg(x_i)$ some multiple of $2L\e' = \e/4$,
while maintaining
$\Lipts{\wh{h}}\le 2L$. 
Construct a $2L$-Lipschitz  extension for $\wh{h}$ from $N$ to all over $\X$ (such an extension always exists, 
\citep{MR1562984,Whitney1934}).
We claim that every classifier in $\TPhF_L$, via its twin $\wt{h} \in \wt{H}_L$, is close to some 
$\wh{h} \in \wh{H}$, in the sense that $\distf_\infty(\wt{h}, \wh{h}) \leq \e$. 
Indeed,
every point $x \in \X$ is $2\e'$-close to some point $x_i \in N$, and since 
${\wt{h}}$ is $L$-Lipschitz and $\wh{h}$ is $2L$-Lipschitz,
\beq
{\distf}(\wt{h}(x), \wh{h}(x))
 & \leq  &
\distf(\wt{h}(x), \wt{h}(x_i))\\
& & + \: \distf(\wt{h}(x_i), \wh{h}(x_i)) \\
& & + \: \distf(\wh{h}(x_i), \wh{h}(x))\\
& \leq &
L \dist(x, x_i) + \e/4 + 2L\dist(x, x_i) \\
& \leq & \e.
\eeq
Thus, $\wh{H}$ provides an $\e$-cover for $\wt{H}_L$ (and hence for $\TPhF_L$). Note that
\(
|\wh H| \leq  (\ceil{ 4k/\e} + 1)^{|N|}
\)
, since by construction,
functions $\wh{h}$ are determined by
their values on $N$, which at a given point can take one of $\ceil{ 4k/\e} + 1$ possible values.
Since by (\ref{eq:X_cov_Num}) we have
\(
|N| =
\mc{N}({\eps}/{8L},\X, \dist)
\leq
\lp
\frac{16L}{\eps}
\rp^D
\)
the bound follows.
\end{proof}
A tighter bound is possible when the metric space $(\X,\dist)$ possesses two additional properties:
\begin{enumerate}
\item
$(\X,\dist)$ is {\em connected} if for all $x,x'\in\X$ and all $\eps>0$,
there is a finite sequence of 
points $x=x_1,x_2,\ldots,x_m=x'$
such that 
$\dist(x_i,x_{i+1})<\eps$
for all $1\le i<m$.
\item
$(\X,\dist)$ is {\em centered} if 
for all $r>0$ and
all $A \subset \X$ with
$\diam(A) \leq 2r$, there exists a point $x \in \X$ such that $\dist(x, a) \leq r$ for all $a \in A$.
\end{enumerate}

\begin{lemma} If $(\X,\dist)$ is connected and centered, then
\label{lem:tight_CovNum}
\beq
\log \mc{N}(\eps,\TPhF_L, \norm{\cdot}_\infty)
= O\lp
{\lp
\frac{L}{\eps}
\rp^D
}
\log k
+ \log \lp \frac{1}{\e}\rp
\rp
.
\eeq
\end{lemma}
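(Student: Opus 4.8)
The plan is to redo the argument of Lemma~\ref{lem:covNum} and remove its one wasteful step. Recall that that proof reduces the task to covering $(\wt H_L,\distf_\infty)$: one fixes an $\eps'$-net $N=\{x_1,\dots,x_{|N|}\}$ of $\X$ with $\eps'\asymp\eps/L$, builds candidate functions by choosing at \emph{each} $x_i$ \emph{independently} a label in $\Y$ and a margin on a grid of spacing $\asymp\eps$ in $[0,1]$, and then $O(L)$-Lipschitz-extends. This costs $\log k+O(\log(1/\eps))$ bits \emph{per} net point, i.e.\ $\log|\wh H|\le|N|\log(k/\eps)=O\paren{(L/\eps)^D(\log k+\log(1/\eps))}$. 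The connectedness hypothesis will let us pay the $\log(1/\eps)$ only once, additively.

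I would take $N$ to be a maximal $\eps'$-separated (hence $\eps'$-covering) subset of $\X$, with $\eps'=\eps/(cL)$ for a suitable absolute constant $c$, and form the adjacency graph $G$ on $N$ joining $x_i\sim x_j$ whenever $\dist(x_i,x_j)\le 3\eps'$. By the $\eps'$-chain property of connectedness, any two centers are joined by an $\eps'$-chain in $\X$ whose successive points each lie within $\eps'$ of a center, so the corresponding centers are $3\eps'$-close; thus $G$ is connected. Order $N$ so that each $x_i$ ($i\ge2$) is $G$-adjacent to some earlier $x_{p(i)}$ (a spanning-tree/BFS order, which exists since $G$ is connected). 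Now encode a candidate $\wh h\colon N\to\Y\times[0,1]$ with grid margins by: the root value $\wh h(x_1)$, costing $\log\paren{k\cdot O(1/\eps)}=\log k+O(\log(1/\eps))$; and, for $i\ge2$, the value $\wh h(x_i)$ \emph{given} $\wh h(x_{p(i)})=(y,\marg)$. Imposing $\Lipts{\wh h}\le 2L$ along the edge $(x_{p(i)},x_i)$ forces $\distf\paren{\wh h(x_{p(i)}),\wh h(x_i)}\le 6L\eps'\le\eps$; inspecting the star metric $\distf$, for each of the $k$ candidate labels of $\wh h(x_i)$ the margin is then confined to $O(1)$ grid values (and if $\marg$ exceeds $\Theta(\eps)$ the label is in fact forced to equal $y$), so $\wh h(x_i)$ has only $O(k)$ admissible values. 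Hence $\log|\wh H|\le\paren{\log k+O(\log(1/\eps))}+(|N|-1)\paren{\log k+O(1)}=O\paren{(L/\eps)^D\log k+\log(1/\eps)}$ by \eqref{eq:X_cov_Num}.

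To see $\wh H$ is an $\eps$-cover of $\wt H_L$ (hence of $\TPhF_L$, by the isometry of Lemma~\ref{lem:covNum}), take $\wt h\in\wt H_L$ and define $\wh h$ on $N$ by rounding: keep the label of $\wt h(x_i)$ (arbitrary when its margin is $0$, which is harmless since $\distf$ then vanishes) and round its margin to the nearest grid value, so $\distf(\wt h(x_i),\wh h(x_i))\le\Theta(\eps)$. For $G$-adjacent $x_i,x_j$, the triangle inequality through $\wt h$ gives $\distf(\wh h(x_i),\wh h(x_j))\le 3L\eps'+\Theta(\eps)\le\eps$, so this rounded function is admissible in the encoding above. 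It remains to extend $\wh h$ from $N$ to all of $\X$ and to bound $\distf_\infty(\wt h,\wh h)$ by passing through the nearest center, exactly as in Lemma~\ref{lem:covNum}.

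The main obstacle is this last extension step. The encoding constrains $\wh h$ to be Lipschitz only \emph{along the edges of} $G$, and in a general connected space the $G$-graph metric can be far larger than $\dist$ between $G$-distant centers (think of a long thin ``C''), so an edge-Lipschitz function need not extend to an $O(L)$-Lipschitz map on $\X$. This is precisely the configuration ruled out by the \emph{centered} hypothesis, and it is what makes a scale-by-scale Lipschitz extension with only a constant-factor loss in the Lipschitz constant go through; it also controls the bookkeeping of the label coordinate near margin $0$, where the target star metric degenerates to an interval. Making this extension precise and tracking the absolute constants through it is the delicate part; the combinatorial count above is then essentially the same as in Lemma~\ref{lem:covNum}.
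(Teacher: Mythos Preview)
Your high-level strategy---connectedness gives a connected adjacency graph on the net $N$, and a spanning-tree encoding then costs $\log(k/\eps)$ at the root plus $O(\log k)$ per edge---is exactly the paper's argument. The paper phrases it as traversing a path through cover points and observing that a $2L$-Lipschitz $\wh h$ can change by at most $\eps/2$ between consecutive centers, whence $\wh h(x_{i_{l+1}})$ lies in one of at most $2k+1$ grid values given $\wh h(x_{i_l})$; this yields $|\wh H|\le(\lceil 4k/\eps\rceil+1)(2k+1)^{|N|-1}$.

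Where you diverge is in manufacturing a difficulty that the paper sidesteps. You define $\wh H$ as the set of \emph{edge}-Lipschitz grid functions on $N$, and then correctly worry that such functions need not be $O(L)$-Lipschitz for the ambient metric $\dist$ and hence need not extend. But you do not need to extend every edge-Lipschitz function. The paper keeps $\wh H$ exactly as in Lemma~\ref{lem:covNum}: functions that are $2L$-Lipschitz on $(N,\dist)$, then Lipschitz-extended to $\X$. That this $\wh H$ is an $\eps$-cover is already proved in Lemma~\ref{lem:covNum}; nothing new is needed. The spanning-tree count is then merely an \emph{upper bound} on $|\wh H|$, since every $2L$-Lipschitz function is in particular edge-Lipschitz. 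The graph-metric-versus-$\dist$ discrepancy you raise is irrelevant: it would only matter if you needed every encoded function to be realized, but for an upper bound on the covering number you only need the reverse inclusion.

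Relatedly, your diagnosis of where the \emph{centered} hypothesis enters is off. It is not used to rescue a Lipschitz extension. The paper invokes it to take each $x_i$ as a ``center'' of the covering set $U_i$ in the sense of Property~2, so that whenever $U_i$ and $U_j$ overlap one gets $\dist(x_i,x_j)\le 2\eps'$ (rather than $4\eps'$), which in turn gives the per-step change bound $\eps/2$ that makes the ``$2k+1$ choices'' count go through cleanly. In other words, centeredness controls constants in the adjacency/step bound, not the extension.
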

\begin{proof} 
With the additional assumptions on $\X$ we
follow the proof idea in \citet{kolmogorov1959varepsilon}
and demonstrate the tighter bound
\(
|\wh H| \leq (\left\lceil 4k/\e \right\rceil + 1) (2k+1)^{|N|-1} =O ( (2k)^{|N|}/\e ).
\)
Here $\wh{H}$ is constructed as in the proof for Lemma~\ref{lem:covNum} but now each $x_i\in N$ is taken to be a ``center'' of $U_i$,
as furnished by Property 2 above.
Let $x_j\in N$.
Since $\X$ is connected, we may traverse a path from 
$x_1$ to $x_j$ via the cover points
$x_1 = x_{i_1},x_{i_2},\dots, x_{i_m} = x_j$, 
such that the distance between any two successive points $(x_{i_l},x_{i_{l+1}})$ is at most 
$2\e' = \e/4L$. 
Since $\wh{h}$ is $2L$-Lipschitz, on any two such points 
the value of $\wh{h}$ can change by at most $\e/2$. 
Thus, given the value $\wh{h}(x_{i_l})$, the value of $\wh{h}(x_{i_{l+1}})$ can take one of at most 
$2k+1$ values 
(as Figure \ref{fig:metric} shows, at the star's hub, 
$\wh{h}(x_{i_{l+1}})$ can take one of $2k+1$ values, 
while 
at one of the spokes only $5$ values are possible).
So we are left to choose the value of $\wh{h}$ on the point $x_1$ to be one from the $\left\lceil 4k/\e \right\rceil + 1$ possible values. The bounds on $|\wh{H}|$ and the metric entropy follow. 
\end{proof}

\subsubsection{Rademacher analysis}%
\label{sec:rademacher}

The {\em Rademacher complexity} of 
the set of functions 
$\TPhF_L$
is defined by
\beqn
\label{eq:rad}
{\radem}_n(\TPhF_L) = 
\E
\left[ 
\sup_{h\in\TPhF_L} 
\oo{n}\sum_{i=1}^n \sigma_i h(X_i,Y_i)
\right],
\eeqn
where the $\sigma_i$ are $n$ independent 
random variables 
with $\P(\sigma_i=+1)=\P(\sigma_i=-1)=1/2$.
In 
Appendix~\ref{app:radem_proof},
we 
invoke
Lemma~\ref{lem:covNum}
to
derive the bound
\beqn
\label{eq:radem}
{\radem}_n(\TPhF_L) 
\leq
2 L \lp \frac{\log 5k}{n} \rp^{1/(D+1)},
\eeqn
which in turn implies
``half'' of our main risk estimate 
(\ref{eq:intro_bound}):
\begin{theorem}
\label{thm:radem}
With probability at least $1-\delta$, for all $L>0$
and every $f\in\F_L$
with its 
projected version
$h_f\in \TPhF_L$,
\beq
\P(g_f(X) \neq Y )  \leq  
\wh\E\sqprn{ \loss({h_f})}
  + \Delta\Rad (n,L,\delta),
\eeq
where 
$g_f$ is the classifier defined in (\ref{eq:g_f}),
$\loss$ is any of the loss functions 
defined in Section~\ref{sec:prelim}
and
$\Delta\Rad (n,L,\delta)$ is at most
\beq
\quad 8 L \lp \frac{\log 5k}{n} \rp^{\oo{D+1}}
+ \sqrt{\pl{\frac{\log\log_2 {2L}}{n}}}
+ \sqrt{\frac{\log{\frac{2}{\delta}}}{2n}}
.
\eeq
\end{theorem}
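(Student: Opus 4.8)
The plan is to first bound the $0$--$1$ error $\P(g_f(X)\neq Y)$ by an expectation of the surrogate loss $\loss$ evaluated at the \emph{projected} classifier $h_f\in\TPhF_L$, then control the resulting empirical process by a one-sided uniform (symmetrization) deviation bound whose complexity term is the Rademacher complexity $\radem_n(\TPhF_L)$ already estimated in (\ref{eq:radem}), and finally discharge the a priori choice of the Lipschitz scale $L$ by a union bound over dyadic values of $L$. The first ingredient is the pointwise domination $\pred{g_f(x)\neq y}\le\loss(h_f(x,y))$ for every $(x,y)\in\X\times\Y$ and for both losses: if $g_f(x)\neq y$ then $y\neq y_f^*(x)$ (the classifier $g_f$ of (\ref{eq:g_f}) and $y_f^*$ being the same $\argmax$ of $f(x,\cdot)$), so by the definitions of $\Phi_f$ and $h_f$ we get $h_f(x,y)=-\tmoo(\marg_f^*(x))\le0$, using $\marg_f^*(x)\ge0$; hence $\loss\cutoff(h_f(x,y))=\pred{h_f(x,y)<1}=1$ and $\loss\margin(h_f(x,y))=\trun{0}{1}(1-h_f(x,y))=1$, while if $g_f(x)=y$ the left side is $0$. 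Taking expectations gives $\P(g_f(X)\neq Y)\le\E[\loss(h_f(X,Y))]$. Since moreover $\loss\margin\le\loss\cutoff$ everywhere on $\R$, it suffices to prove the theorem with $\loss=\loss\margin$: the case $\loss=\loss\cutoff$ follows from $\P(g_f(X)\neq Y)\le\E[\loss\margin(h_f)]\le\wh\E[\loss\margin(h_f)]+\Delta\Rad(n,L,\delta)\le\wh\E[\loss\cutoff(h_f)]+\Delta\Rad(n,L,\delta)$.

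Next, $\loss\margin$ is $1$-Lipschitz and maps $[-1,1]$ into $[0,1]$, so the random variable $\sup_{h\in\TPhF_L}\bigl(\E[\loss\margin(h)]-\wh\E[\loss\margin(h)]\bigr)$ has bounded differences $1/n$. Combining McDiarmid's inequality, symmetrization, and the Ledoux--Talagrand contraction inequality (which strips off the $1$-Lipschitz $\loss\margin$ with no loss in the constant) yields, with probability at least $1-\delta'$, simultaneously for all $f\in\F_L$,
\[
\E[\loss\margin(h_f)]\le\wh\E[\loss\margin(h_f)]+2\,\radem_n(\TPhF_L)+\sqrt{\tfrac{\log(1/\delta')}{2n}}.
\]
Plugging in (\ref{eq:radem}) replaces $2\,\radem_n(\TPhF_L)$ by $4L\bigl(\tfrac{\log 5k}{n}\bigr)^{1/(D+1)}$. (The estimate (\ref{eq:radem}) itself is obtained, as carried out in the appendix, by feeding the metric-entropy bound of Lemma~\ref{lem:covNum} into Dudley's entropy integral.)

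To make the bound hold simultaneously for all $L>0$, I would discretize: for $j\in\ZZ$ set $L_j=2^{j}$ and apply the previous display at each $L_j$ with confidence parameter $\delta_j=\tfrac{\delta}{2}w_j$, where $w_j>0$, $\sum_j w_j=1$, and $w_j$ decays polynomially in $|j|$, so $\log(1/w_j)=O(\log(1+|j|))$. A union bound gives, with probability at least $1-\delta/2$, the display at every scale $L_j$ at once. Given an arbitrary $L>0$ and $f\in\F_L$, we have $f\in\F_{L_j}$ for the least dyadic $L_j\ge L$, and $L_j\le 2L$; substituting $L_j$ for $L$ turns $4L$ into $8L$ in the complexity term (this is where the $8L$ rather than $4L$ arises), while $\sqrt{\log(1/\delta_j)/2n}\le\sqrt{\log(2/\delta)/2n}+\sqrt{\log(1/w_j)/2n}$ by subadditivity of the square root. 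Since $|j|=O(\log_2 L)$ for $L\ge1$, $\log(1/w_j)=O(\log\log_2 2L)$, so the three error terms add up to the stated $\Delta\Rad(n,L,\delta)$, with the $(\cdot)_+$ merely flooring the $\log\log$ penalty at $0$ for small $L$; combining the two $\delta/2$ allocations gives overall probability at least $1-\delta$.

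I expect the conceptual ingredients — the domination $\pred{g_f\neq Y}\le\loss(h_f)$, symmetrization plus contraction, and the Rademacher estimate (\ref{eq:radem}) — to be essentially routine, so the only genuine obstacle is the bookkeeping in the last step: choosing the weights $w_j$ so that the union-bound penalty collapses to exactly $\sqrt{(\log\log_2 2L)_+/n}$, and checking that the doubling $L\mapsto L_j\le 2L$ is compatible with the exponent $1/(D+1)$ so that nothing worse than the claimed $8L$ prefactor survives.
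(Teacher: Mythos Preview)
Your proposal is correct and follows essentially the same route as the paper: the paper's proof simply invokes an adaptation of \citet[Theorem~4.5]{mohri-book2012} (which packages exactly the McDiarmid\,+\,symmetrization\,+\,contraction argument you spell out, together with the dyadic stratification over $L$ producing the $\sqrt{(\log\log_2 2L)_+/n}$ penalty), then uses $\pred{u<0}\le\loss\margin(u)\le\loss\cutoff(u)$ and plugs in (\ref{eq:radem}). Your version is just a more explicit unpacking of that citation; the only cosmetic difference is that you obtain the factor $8$ as $2\cdot 2\cdot 2$ (contraction constant $\times$ (\ref{eq:radem}) constant $\times$ dyadic doubling), whereas the paper records it as $4\radem_n\le 8L(\cdot)^{1/(D+1)}$ directly.
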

\subsubsection{Scale-sensitive analysis}
\label{sec:scale-sensitive}
The following Theorem
, proved in Appendix \ref{app:scale_proof}, 
is an adaptation of \citet[Theorem 1]{MR2383568}, using Lemma~\ref{lem:covNum}.
\begin{theorem}
\label{thm:scale}
With probability at least $1-\delta$, for all $L>0$
and every $f\in\F_L$
with its induced 
$h_f\in \TPhF_L$,
\beq
\P( g_f(X) \neq Y )  \leq 
\wh\E\sqprn{\loss\cutoff({h_f})} 
+ \Delta_{\fat}(n,L,\delta),
\eeq
where 
$\Delta_{\fat}(n,L,\delta)$ is at most
\begin{align*}
 \sqrt{
 \frac{2 }{n}
 \lp
2
\lp
{16L}
\rp^{D} 
\log
 \lp
20k
 \rp
 + \ln \lp \frac{2 L}{ \delta} \rp
 \rp
 }
 +\oo{n}.
 \end{align*}
\end{theorem}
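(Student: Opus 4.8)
The plan is to run the scale-sensitive argument of \citet{MR2383568}, which requires no chaining: only a single $\norm{\cdot}_\infty$-net of $\TPhF_L$, whose log-cardinality is given by Lemma~\ref{lem:covNum}, together with Hoeffding's inequality and two union bounds (over the net and over the Lipschitz constant $L$). Symmetrization is avoided because the net furnished by Lemma~\ref{lem:covNum} is fixed in advance and does not depend on the sample $\samp$.

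First I would record the elementary fact that a misclassified pair has a nonpositive truncated score: if $g_f(x)\neq y$ then, by the definitions of $g_f$, $y_f^*$, $\Phi_f$ and the identity $\marg_f^*\ge0$, one has $\Phi_f(x,y)\le0$, hence $h_f(x,y)=\trun{\mhyphen1}{1}(\Phi_f(x,y))\le0$. Now fix a scale $\eps_0\in(0,\tfrac12)$ and a minimal $\eps_0$-net $\wh H$ of $\TPhF_L$ in $\norm{\cdot}_\infty$, of cardinality $\mc{N}(\eps_0,\TPhF_L,\norm{\cdot}_\infty)$; for $h_f\in\TPhF_L$ pick $\wh h\in\wh H$ with $\norm{h_f-\wh h}_\infty\le\eps_0$. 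Let $\psi:\R\to[0,1]$ be the ramp equal to $1$ on $(-\infty,\eps_0]$, to $0$ on $[1-\eps_0,\infty)$, and affine in between. Then for all $(x,y)$: (i) $\pred{g_f(x)\neq y}\le\psi(\wh h(x,y))$, because $g_f(x)\neq y\Rightarrow h_f(x,y)\le0\Rightarrow\wh h(x,y)\le\eps_0\Rightarrow\psi(\wh h(x,y))=1$; and (ii) $\psi(\wh h(x,y))\le\loss\cutoff(h_f(x,y))$, because $\psi(\wh h(x,y))>0\Rightarrow\wh h(x,y)<1-\eps_0\Rightarrow h_f(x,y)<1\Rightarrow\loss\cutoff(h_f(x,y))=1$. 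Averaging (i) under $\D$ and (ii) under the empirical measure yields $\P(g_f(X)\neq Y)\le\E[\psi(\wh h(X,Y))]$ and $\wh\E[\psi(\wh h)]\le\wh\E[\loss\cutoff(h_f)]$.

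Since $\wh H$ is a fixed finite set and each $\psi\circ\wh h$ is $[0,1]$-valued, Hoeffding plus a union bound over $\wh H$ give, for any fixed $L$ and any $\delta'\in(0,1)$, with probability at least $1-\delta'$, that $\E[\psi(\wh h)]-\wh\E[\psi(\wh h)]\le\sqrt{\tfrac{1}{2n}\bigl(\log\mc{N}(\eps_0,\TPhF_L,\norm{\cdot}_\infty)+\log\tfrac1{\delta'}\bigr)}$ for all $\wh h\in\wh H$ at once; chaining this with the two inequalities of the previous paragraph gives, for every $f\in\F_L$, $\P(g_f(X)\neq Y)\le\wh\E[\loss\cutoff(h_f)]+\sqrt{\tfrac{1}{2n}\bigl(\log\mc{N}(\eps_0,\TPhF_L,\norm{\cdot}_\infty)+\log\tfrac1{\delta'}\bigr)}$. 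To make the statement uniform over all $L>0$ under a single budget $\delta$, I would apply this along a geometric grid $L\in\{2^j:j\in\ZZ\}$ with failure probabilities $\delta_j$ summing to $\delta$ (e.g. $\delta_j\propto\delta\,2^{-|j|}$), rounding an arbitrary $L$ up to the nearest grid point at the price of a factor $2$ in the Lipschitz constant; for the relevant $j$ the penalty $\log(1/\delta_j)$ is of order $\ln(2L/\delta)$. Finally, plugging in Lemma~\ref{lem:covNum}, namely $\log\mc{N}(\eps_0,\TPhF_L,\norm{\cdot}_\infty)\le(16L/\eps_0)^D\log(5k/\eps_0)$, with $\eps_0$ set to the appropriate constant (the bookkeeping $5k/\eps_0=20k$ points to $\eps_0=1/4$) and collecting constants produces $\Delta_{\fat}(n,L,\delta)\le\sqrt{\tfrac{2}{n}\bigl(2(16L)^D\log(20k)+\ln\tfrac{2L}{\delta}\bigr)}+\tfrac1n$, the residual $\tfrac1n$ absorbing the rounding slack (the ceilings in Lemma~\ref{lem:covNum} and the grid).

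The main obstacle is the calibration in the second paragraph: the sandwiching surrogate $\psi$ and the net scale $\eps_0$ must be picked so that \emph{simultaneously} (a) the population $0$--$1$ risk is controlled by $\E[\psi(\wh h)]$ after the perturbation $h_f\mapsto\wh h$, and (b) the empirical $\psi$-risk of the net element still lies below the empirical $\loss\cutoff$-risk of $h_f$ itself; there is room for the unrelaxed threshold $1$ on both sides only because $\eps_0<\tfrac12$, and recovering the exact constants in $\Delta_{\fat}$ (rather than the cruder $(16L/\eps_0)^D$ that the argument above literally yields) is where one must lean on the refined deviation inequality of \citet{MR2383568}. The remaining pieces --- Hoeffding, the union bound over $L$, and the substitution of the metric entropy --- are routine.
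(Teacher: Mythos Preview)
Your proposal is correct and follows essentially the same route as the paper: the paper's proof simply invokes \citet[Theorem~1]{MR2745296} (with $\gamma=1$ and stratified over $L$) as a black box and then substitutes the metric-entropy bound of Lemma~\ref{lem:covNum} at scale $\eps_0=1/4$, whereas you unpack that black box explicitly via the ramp-surrogate sandwich, Hoeffding over a fixed $\norm{\cdot}_\infty$-net, and a geometric union bound over $L$. Your honest caveat about the constants is well placed --- with $\eps_0=1/4$ Lemma~\ref{lem:covNum} literally gives $(64L)^D\log(20k)$ rather than $(16L)^D\log(20k)$, so matching the stated $\Delta_{\fat}$ exactly does require the sharper deviation inequality of \citet{MR2383568,MR2745296} rather than raw Hoeffding; but this affects only absolute constants, not the structure of the argument.
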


\subsubsection{Combined Bound}
\label{sec:main-theorem}
Taking $\loss = \loss\cutoff$ in Theorem \ref{thm:radem} we can merge the above two bounds by taking their minimum.
Namely,
Theorem~\ref{thm:scale} 
holds
with
\(
\Delta(n,L,\delta)  = \min\set{\Delta\Rad(n,L,\delta),\Delta_\fat(n,L,\delta)}
\)
in place of $\Delta_{\fat}(n,L,\delta)$,
see Figure~\ref{fig:bounds}. 
The resulting risk decay rate is 
of order
\beq
    \min
        \left\{
L \lp \frac{\log k}{n} \rp ^ {\frac{1}{D + 1}},
L^{\frac{D}{2}} \lp{\frac{\log k}{n}}\rp^{\oo{2}}
      \right\}
,
\eeq
as claimed in (\ref{eq:intro_bound}).
In terms of the number of classes $k$,
our bound compares favorably to those
in \citet{allwein2001reducing, MR2383568, MR2745296}, and more recently in \citet{DBLP:journals/jmlr/DanielySBS11}, 
which
have a $k$-dependence of $O(d\natr\log k)$, 
where $d\natr$ is the (scale-sensitive, $k$-dependent) 
Natarajan dimension of the multiclass hypothesis class. 
The optimal dependence of the risk on $k$ is an intriguing open problem.
\begin{figure}%
\centering
\includegraphics[scale=.45]{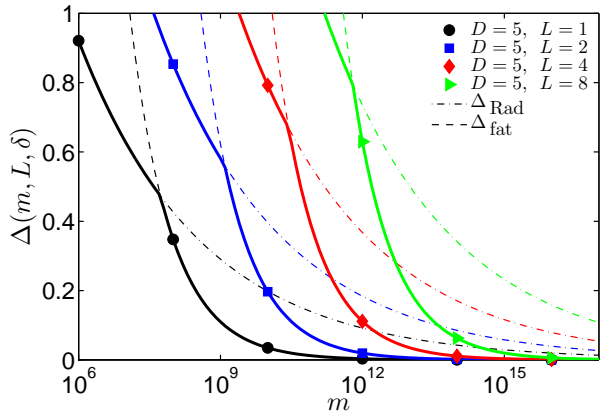}%
\caption{The combined complexity bounds ($k=10, \delta=0.01$).}%
\label{fig:bounds}%
\end{figure}

\section{Algorithm}
\label{sec:alg}
Theorems~\ref{thm:radem} and \ref{thm:scale} yield generalization bounds of the schematic form
\beqn
\label{eq:alg_bound}
\P(g(X)\neq Y) \le \wh\E[\loss] + \Delta(n,L,\delta).
\eeqn
The free parameter $L$ in (\ref{eq:alg_bound}) controls
(roughly speaking) the bias-variance tradeoff:
for larger $L$, we may achieve a smaller empirical loss
$\wh\E[\loss]$ at the expense of a larger hypothesis complexity 
$\Delta(n,L,\delta)$.
Our Structural Risk Minimization 
(SRM) consists of seeking the optimal $L$ --- i.e.,
one that minimizes the right-hand side of (\ref{eq:alg_bound}) ---
via the following high-level procedure:
\begin{enumerate}
\item For each $L>0$, minimize $\wh\E[\loss(h_f)]$ over $f\in\F_L$.
\item Choose the optimal $L^*$ and its corresponding classifier $g_f$ with
$f\in\F_{L^*}$.
\end{enumerate}

\paragraph{Minimizing the empirical loss.}
Let $\samp=\paren{X_i,Y_i}_{i=1}^n$
be the training sample
and
$L>0$
a given maximal allowed Lipschiz constant.
We will say that 
a function $h \in \TPhF_L$ is {\em inconsistent}
with a sample point $(x,y)$ if $h(x,y)<1$
(i.e., if the margin of $h$ on $(x,y)$ is less than one).
Denote by
$\wh{\mist}(L)$ 
the smallest possible number of 
sample points
on which a function $h\in\TPhF_L$
may be inconsistent:
\[
\wh{\mist}(L) = \min_{h \in \TPhF_L}
\wh{\E}[{\loss\cutoff(h)}].
\]
Thus, our SRM problem consists of finding
\[
L^* = 
\argmin_{L>0}
\left\{
\wh{\mist}(L) + \Delta(n,L,\delta)
\right\}.
\]
For $k=2$,
\citet{DBLP:conf/colt/GottliebKK10}
reduced 
the problem of computing $\wh{\mist}(L)$
to one of finding a minimal vertex cover in a bipartite graph
(by K\"onig's theorem, the latter is efficiently computable
as a maximal matching).
We will extend this technique to $k>2$ as follows.
Define the
$k$-partite graph $G_L = (\{V^y\}_{y=1}^k, E)$,
where
each vertex set $V^y$ corresponds to the sample points $S^y$
with label $y$.
Now in order for 
$h \in \TPhF_L$
to be consistent with
the points 
$(X_i,Y_i)$ and $(X_j,Y_j)$ 
for $Y_i\neq Y_j$, the following relation must hold:
\beqn
\label{eq:edge-cond}
L\dist(X_i,X_j)\ge2.
\eeqn
Hence, we define the edges of $G_L$ to consist of all point
pairs violating (\ref{eq:edge-cond}):
\beq
(X_i,X_j)\in E &\iff& (Y_i\neq Y_j ) \wedge (\dist(X_i,X_j)<2/L).
\eeq
Since removing either of $X_i,X_j$ in (\ref{eq:edge-cond}) also deletes the 
violating edge, $\wh{\mist}(L)$ is 
by construction equivalent to the size of the minimum vertex cover for $G_L$.
Although minimum vertex cover is NP-hard to compute (and even hard to approximate
within a factor of 1.3606, \citep{dinur2005}),
a $2$-approximation may be found in $O(n^2)$ time
\citep{papadimitriou1982combinatorial}.
This yields a $2$-approximation 
$\wt{\mist}(L)$
for $\wh{\mist}(L)$.

\paragraph{Optimizing over $L$.}
Equipped with an efficient routine for computing $\wt{\mist}(L)\le2\wh{\mist}(L)$,
we now seek an $L>0$ that minimizes 
\beqn
\label{eq:wt}
{Q}(L) := \wt{\mist}(L) + \Delta(n,L,\delta).
\eeqn
Since the Lipschitz constant induced by the data is determined by
the $\binom{n}{2}$ distances among the sample points,
we need only consider $O(n^2)$ values of $L$.
Rather a brute-force searching all of these values,
Theorem 7 of \citet{DBLP:conf/colt/GottliebKK10} shows 
that 
using an $O(\log n)$ time binary search over the values of $L$,
one may 
approximately minimize $Q(L)$, which in turn
yields an approximate solution to (\ref{eq:alg_bound}).
The resulting procedure has runtime
$O(n^2\log n)$ and guarantees
an $\wt{L}$ for which
\beqn
\label{eq:Q}
Q(\wt{L})
&\le&
4\sqprn{\wh{\mist}({L}^*) + \Delta(n,{L}^*,\delta)}.
\eeqn

\paragraph{Classifying test points.}
Given the nearly optimal Lipschitz constant $\wt{L}$ computed above
we 
construct the approximate (within a factor of 4)
empirical risk minimizer $h^*\in\TPhF_{\wt{L}}$.
The latter partitions the sample into $S=S_0\cup S_1$,
where $S_1$ consists of the points on which $h^*$
is consistent
and $S_0=S\setminus S_1$.
Evaluating $h^*$ on a test point
amounts to
finding its nearest neighbor in $S_1$.
Although in general metric spaces, nearest-neighbors search requires
$\Omega(n)$ time, for doubling spaces, an exponential speedup is available
via approximate nearest neighbors (see Section \ref{sec:ext}).

\section{Extensions}
\label{sec:ext}
In this section, we discuss two approaches
that render
the methods presented above considerably 
more efficient in terms of runtime and
generalization bounds.
The first is based on the fact that in
doubling spaces,
hypothesis evaluation time may be reduced
from $O(n)$ to $O(\log n)$ at the expense of a very slight degradation
of the generalization bounds.
The second relies on a recent metric dimensionality reduction result.
When the data is ``close'' to being $\tilde D$-dimensional,
with $\tilde D$ much smaller than the ambient metric space dimension $D$,
both the evaluation runtime and the generalization bounds
may be significantly improved --- depending essentially on $\tilde D$
rather than $D$.

\subsection{Exponential speedup via approximate NN}
\label{sec:fast}
If $(\X,d)$ is a metric space and
$x^*\in E\subset\X$ is a minimizer of $\dist(x,x')$ over $x'\in E$, then $x^*$ is
a {\em nearest neighbor} of $x$ in $E$. 
A simple information-theoretic argument shows that the time complexity of computing
an exact nearest neighbor in general metric spaces has $\Omega(n)$ time complexity.
However, an exponential speedup is possible if 
(i) $\X$ is a doubling space and (ii) one is willing
to settle for approximate nearest neighbors.
A $(1+\eta)$ nearest neighbor oracle returns an
$\wt{x}\in E$ such that 
\beqn
\label{eq:app_metric}
\dist(x,x^*)\le\dist(x,\wt{x})\le(1+\eta)\dist(x,x^*).
\eeqn

We will use the fact that 
in a doubling space,
one may precompute a $(1+\eta)$ nearest neighbor data structure in 
$(2^{O(\ddim(\X))}\log n + \eta^{-O(\ddim(\X))})n$  
time and evaluate it on a test point in 
$2^{O(\ddim(\X))}\log n + \eta^{-O(\ddim(\X))}$  
time \cite{cole2006searching,har2006fast}.
The approximate nearest neighbor oracle induces
an $\eta$-approximate version of $\gnn$ in defined (\ref{eq:gnn}).
After performing SRM as described in Section~\ref{sec:alg},
we are left with a subset $S_1\subset S$ of the sample, which
will be used to label test points.
More precisely, the predicted label of a test point will be determined by
its $\eta$-nearest neighbor in $S_1$.

The exponential speedup afforded by approximate nearest neighbors
comes at the expense of mildly degraded generalization guarantees.
The modified generalization bounds
are derived in
three steps,
whose details are deferred to Appendix~\ref{app:app}:\\
{(i)}
We cast the evaluation of
$h\in \TPhF_L$ 
in (\ref{eq:mainClass})
as a nearest neighbor calculation 
with a corresponding $\wt{h}$ induced by the $(1+\eta)$ approximate
nearest neighbor oracle.
The nearest-neighbor formulation of $h$ is essentially the one obtained by
\citet{DBLP:journals/jmlr/LuxburgB04}:
\begin{alignat}{2}
\label{eq:lip_ext}
h(x,y) = 
\oo{2}
&  \lp
    \min_{
\samp_1} 
         \right.            
         \left\{
\xi(y,y')
+ L\dist(x,x')
    \right\}
    \\
    \nonumber
& \quad \left.   +\,
    \max_{
\samp_1}    
    \left\{
\xi(y,y')
- L\dist(x,x')
    \right\}
\rp,
\end{alignat}
where
$(x',y')
\in
S_1$ and
$\xi(y,y')=2\pred{y=y'}-1$.\\
{(ii)}
We observe a simple relation between $h$ and $\wt{h}$:
\beq
\tsnrm{h-\wt{h}}_\infty \equiv
\sup_{x\in\X,y\in\Y}\tsabs{h(x,y) - \wt{h}(x,y)}\le 2\eta.
\eeq
{(iii)}
Defining the $2\eta$-perturbed function class
\beq
\TPhF_{L,2\eta} 
= \{\trun{\mhyphen1}{1} (h'): \nrm{h'-h}_\infty \leq 2\eta , 
h \in \TPhF_{L} \},
\eeq
we relate its metric entropy to that of 
$\TPhF_{L}$:
\begin{lemma} For $\eps > 2\eta > 0$, we have
\label{lem:app_cov_num}
\beq
\mc{N}(\eps,\TPhF_{L,{2\eta}}, \norm{\cdot}_\infty) 
& \leq & \mc{N}({\eps-2\eta},\TPhF_{L}, \norm{\cdot}_\infty)
.
\eeq
\end{lemma}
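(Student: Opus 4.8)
The plan is to show that any minimal $(\eps-2\eta)$-cover of $\TPhF_L$ in the sup-norm already serves, verbatim, as an $\eps$-cover of the perturbed class $\TPhF_{L,2\eta}$; the stated inequality between covering numbers follows immediately.

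First I would fix a minimal $(\eps-2\eta)$-cover $\mc{C}\subseteq\TPhF_L$, of cardinality $\mc{N}(\eps-2\eta,\TPhF_L,\norm{\cdot}_\infty)$, which is well-defined precisely because the hypothesis $\eps>2\eta$ makes the reduced radius positive. I would then record the elementary but load-bearing fact that every $h\in\TPhF_L$ already takes values in $[-1,1]$, so $\trun{\mhyphen1}{1}(h)=h$; in particular $\TPhF_L\subseteq\TPhF_{L,2\eta}$ (take the perturbation to be $0$), so the centers in $\mc{C}$ are legitimate members of the class to be covered. Next, given an arbitrary $\trun{\mhyphen1}{1}(h')\in\TPhF_{L,2\eta}$ with $\norm{h'-h}_\infty\le2\eta$ for some $h\in\TPhF_L$, I would pick $c\in\mc{C}$ with $\norm{h-c}_\infty\le\eps-2\eta$ and bound $\norm{\trun{\mhyphen1}{1}(h')-c}_\infty$ by the triangle inequality routed through $h$.

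The key observation is that the truncation operator $\trun{\mhyphen1}{1}$ is a contraction ($1$-Lipschitz) on $\R$, being a composition of coordinatewise $\min$ and $\max$ with constants. Hence $\norm{\trun{\mhyphen1}{1}(h')-\trun{\mhyphen1}{1}(h)}_\infty\le\norm{h'-h}_\infty\le2\eta$, and since $\trun{\mhyphen1}{1}(h)=h$ and $\norm{h-c}_\infty\le\eps-2\eta$, we get $\norm{\trun{\mhyphen1}{1}(h')-c}_\infty\le2\eta+(\eps-2\eta)=\eps$. Thus $\mc{C}$ is an $\eps$-cover of $\TPhF_{L,2\eta}$, giving $\mc{N}(\eps,\TPhF_{L,2\eta},\norm{\cdot}_\infty)\le|\mc{C}|=\mc{N}(\eps-2\eta,\TPhF_L,\norm{\cdot}_\infty)$.

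There is no real obstacle here; this is a short argument whose only delicate points are bookkeeping ones. One must make sure the cover centers live in the already-truncated class $\TPhF_L$ so that applying $\trun{\mhyphen1}{1}$ to them is a no-op, and one must invoke the contraction property of $\trun{\mhyphen1}{1}$ to absorb the $2\eta$ perturbation without any loss of radius. The role of the assumption $\eps>2\eta$ is exactly to keep $\eps-2\eta>0$ so that the right-hand side is meaningful.
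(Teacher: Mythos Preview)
Your proposal is correct and follows essentially the same route as the paper's proof: take a minimal $(\eps-2\eta)$-cover of $\TPhF_L$, and for any element of $\TPhF_{L,2\eta}$ use the triangle inequality through its defining $h\in\TPhF_L$ to land within $\eps$ of a cover center. If anything, you are slightly more careful than the paper in explicitly invoking the $1$-Lipschitz property of $\trun{\mhyphen1}{1}$ and the fact that $\trun{\mhyphen1}{1}(h)=h$ for $h\in\TPhF_L$; the paper simply asserts $\tsnrm{\tilde h-h}_\infty\le2\eta$ without spelling out the truncation step.
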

The metric entropy estimate for
$\TPhF_{L,{2\eta}}$ readily yields
$\eta$-perturbed versions of Theorems 
\ref{thm:radem} and \ref{thm:scale}.
From the standpoint of generalization bounds,
the effect of the $\eta$-perturbation
on $\TPhF_{L}$ amounts, roughly speaking,
to replacing $L$ by $L(1+O(\eta))$,
which constitutes a rather benign degradation.

\subsection{Adaptive dimensionality reduction}
\label{sec:adapt}
The generalization bound in (\ref{eq:intro_bound})
and the runtime of our sped-up algorithm in Section~\ref{sec:fast}
both depend exponentially on the doubling dimension of the metric space.
Hence, even a modest dimensionality reduction could lead to dramatic savings
in algorithmic and sample complexities. The standard Euclidean dimensionality-reduction
tool, PCA, until recently had no metric analogue --- at least not with rigorous performance
guarantees. The technique proposed in \citet{gkr2013-alt} may roughly be described 
as a metric analogue of PCA.

A set $X=\set{x_1,\ldots,x_n}\subset\X$ inherits the metric $\dist$ of $\X$ and hence 
$\ddim(X)\le\ddim(\X)$ is well-defined.
We say that $\tilde X=\set{\tilde x_1,\ldots, \tilde x_n}\subset\X$ is an
$(\alpha,\beta)$-{\em perturbation} of $X$ if
$\sum_{i=1}^n\dist(x_i,\tilde x_i)\le\alpha$ and $\ddim(\tilde X)\le\beta$.
Intuitively, the data is ``essentially'' low-dimensional if it admits
an $(\alpha,\beta)$-{perturbation} with small $\alpha,\beta$,
which leads to improved Rademacher estimates.
The {\em empirical Rademacher complexity} of $\TPhF_L$ 
on a sample $S=(X,Y)\in\X^n\times\Y^n$
is given by
\beq
\wh{\radem}_n(\TPhF_L;S) = 
\E
\sqprn{
\left.
\sup_{h\in\TPhF_L} 
\oo{n}\sum_{i=1}^n \sigma_i h(X_i,Y_i)
\,\right\vert
S\,
}
\eeq
and is related to $\radem_n$ defined in (\ref{eq:rad})
via 
\beq
\radem_n(\TPhF_L) &=& \E_S\sqprn{\wh{\radem}_n(\TPhF_L;S)}
\\
\P\paren{\abs{ \radem_n - \wh{\radem}_n } \ge \eps} &\le& 2\exp(-\eps^2n/2),
\eeq
where the identity is obvious and the inequality is a simple consequence of measure concentration 
\citep{mohri-book2012}. Hence, up to small changes in constants, the two may be used in generalization
bounds such as Theorem~\ref{thm:radem} interchangeably.
The data-dependent nature of $\wh{\radem}_n$ lets us exploit essentially low-dimensional data
(see Appendix~\ref{app:dim-red-proof}):
\begin{theorem}
\label{thm:rad-dim-red}
Let $S=(X,Y)\in\X^n\times\Y^n$ be the training sample
and suppose that $X$ admits
an
$(\alpha,\beta)$-perturbation $\tilde X$.
Then
\beqn
\label{eq:rad-dim-red}
\wh{\radem}_n(\TPhF_L;S) = O\paren{ L\paren{\alpha +\paren{\frac{\log k}{n}}^{\oo{1+\beta}}}}.
\eeqn
\end{theorem}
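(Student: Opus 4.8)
The plan is to split the empirical Rademacher average into a \emph{perturbation} term, absorbed using the $L$-Lipschitz structure of $\TPhF_L$ together with the displacement budget $\alpha$, and a \emph{complexity} term, obtained by re-running the metric-entropy bound of Lemma~\ref{lem:covNum} over the low-dimensional point set $\wt X$ rather than over all of $\X$.

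First I would perform the perturbation swap. Fixing a sign vector $\sigma$ and $h\in\TPhF_L$, and using that each slice $h(\cdot,y)\colon\X\to[-1,1]$ is $L$-Lipschitz,
\[
\oo n\sum_{i=1}^n\sigma_i h(X_i,Y_i)
\le
\oo n\sum_{i=1}^n\sigma_i h(\wt X_i,Y_i)
+\frac{L}{n}\sum_{i=1}^n\dist(X_i,\wt X_i)
\le
\oo n\sum_{i=1}^n\sigma_i h(\wt X_i,Y_i)+\frac{L\alpha}{n}.
\]
Taking $\sup_{h\in\TPhF_L}$ and then $\E_\sigma$ would give $\wh{\radem}_n(\TPhF_L;(X,Y))\le\wh{\radem}_n(\TPhF_L;(\wt X,Y))+L\alpha/n$, so it remains to bound the complexity term $\wh{\radem}_n(\TPhF_L;(\wt X,Y))$.

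For that, I would observe that $\wh{\radem}_n(\TPhF_L;(\wt X,Y))$ depends on $h\in\TPhF_L$ only through its restriction to the finite set $\wt X$, and---via the isometry in the proof of Lemma~\ref{lem:covNum}---those restrictions embed in the class of $L$-Lipschitz maps from $(\wt X,\dist)$ into the star-metric space $(\Y\times[0,1],\distf)$. Since the proof of Lemma~\ref{lem:covNum} invokes nothing about the ambient space beyond the covering estimate (\ref{eq:X_cov_Num}), re-running it over $(\wt X,\dist)$---which has $\diam(\wt X)\le\diam(\X)=1$ and $\ddim(\wt X)\le\beta$---would yield
\[
\log\mc{N}(\eps,\TPhF_L|_{\wt X},\norm{\cdot}_\infty)\le\paren{\frac{16L}{\eps}}^{\beta}\log\paren{\frac{5k}{\eps}}.
\]
Plugging this metric-entropy estimate into the same (sample-wise) chaining argument that produced (\ref{eq:radem}), now with $D$ replaced by $\beta$, would give $\wh{\radem}_n(\TPhF_L;(\wt X,Y))\le 2L(\log 5k/n)^{1/(\beta+1)}$; combining with the first step yields
\[
\wh{\radem}_n(\TPhF_L;(X,Y))\le\frac{L\alpha}{n}+2L\paren{\frac{\log 5k}{n}}^{\oo{1+\beta}}=O\paren{L\paren{\alpha+\paren{\frac{\log k}{n}}^{\oo{1+\beta}}}}.
\]

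The step I expect to carry the weight is the second one: I must check that both Lemma~\ref{lem:covNum} and the Rademacher estimate (\ref{eq:radem}) it feeds are genuinely insensitive to the ambient metric space and localize to the finite sub-metric-space spanned by the perturbed sample, so that the exponent is governed by $\ddim(\wt X)\le\beta$ rather than by $D$. The perturbation swap and the final arithmetic are routine.
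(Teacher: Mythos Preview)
Your proposal is correct and follows essentially the same route as the paper: split off a perturbation term via the $L$-Lipschitz property of each slice $h(\cdot,y)$ to get the $L\alpha/n$ contribution, then bound $\wh{\radem}_n(\TPhF_L;\tilde S)$ by re-running the covering/chaining argument of Lemma~\ref{lem:covNum} and (\ref{eq:radem}) with the exponent $\beta=\ddim(\tilde X)$ in place of $D$. If anything, you are more explicit than the paper about the localization step---the paper simply invokes (\ref{eq:radem}) with $\beta$ substituted for $D$, whereas you spell out that the empirical Rademacher average depends only on restrictions to $\tilde X$ and that the Kolmogorov--Tikhomirov construction in Lemma~\ref{lem:covNum} uses nothing about the ambient space beyond its doubling dimension via (\ref{eq:X_cov_Num}).
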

A pleasant feature of the bound above is that it does not depend on $\ddim(\X)$
(the dimension of the ambient space)
or even on $\ddim(X)$
(the dimension of the data).
Note the inherent tradeoff between the distortion $\alpha$
and dimension $\beta$, with some non-trivial $(\alpha^*,\beta^*)$ minimizing the right-hand side
of (\ref{eq:rad-dim-red}). Although computing the optimal 
$(\alpha^*,\beta^*)$ seems computationally difficult, 
\citet{gkr2013-alt} 
were able to obtain an 
efficient
$(O(1),O(1))$-{\em bicriteria approximation}.
Namely, their algorithm computes an 
$\tilde\alpha\le c_0\alpha^*$
and
$\tilde\beta\le c_1\beta^*$,
with the corresponding perturbed set $\tilde X$,
for universal constants $c_0,c_1$,
with a runtime of
$2^{O(\ddim(X))}n\log n+O(n\log^5n)$.

The optimization routine over $(\alpha,\beta)$ may then be embedded inside
our SRM optimization over the Lipschitz constant $L$
in Section~\ref{sec:alg}.
The end result will be a nearly optimal (in the sense of (\ref{eq:Q}))
Lipschitz constant $\wt{L}$, which induces the partition $S=S_0\cup S_1$,
as well as $(\tilde\alpha,\tilde\beta)$, which induce the perturbed set $\tilde S_1$.
To evaluate our hypothesis on a test point, we may invoke the $(1+\eta)$-approximate nearest-neighbor
routine from Section~\ref{sec:fast}.
This involves a precomputation of time complexity
$(2^{O(\tilde\beta)}\log n + \eta^{-O(\tilde\beta)})n$,
after which new points are classified in
$2^{O(\tilde\beta)}\log n + \eta^{-O(\tilde\beta)}$
time.
Note that the evaluation time complexity depends only on the ``intrinsic dimension''
$\tilde\beta$ of the data, rather than the ambient metric space dimension.

\appendix
\section{Bayes near-optimality proof}
\label{app:bayes_proof}
\begin{proof}[Proof of Theorem~\ref{thm:bayes}] Since $\veta$ is $L$-Lipschitz, given $x,x'\in\X$ we have
\begin{alignat}{2}
\label{eq:YneqY_bound}
 \D(Y \neq Y' \gn x,&x')
= 
\sum_{j\in\Y} \bet_j(x) ( 1 - \bet_j(x'))   \\
\nonumber
& \leq 
\sum_j \bet_j(x) \lp 1 - \bet_j(x) + L\dist(x,x') \rp\\
\nonumber
 &= 
\sum_j \bet_j(x) \lp 1 - \bet_j(x) \rp + L\dist(x,x').
\end{alignat}
By the definition of the nearest neighbor classifier $\gnn$ in (\ref{eq:hnn-def}) we have $\E_{\samp}[\D(\gnn(X) \neq Y)]  =  \E_{\samp} [\D( Y_{\pi_1(X)} \neq Y)]$, where the expectation is over the sample $\samp$ determining $\gnn$. 
By (\ref{eq:YneqY_bound}) this error is bounded above by
\beq 
\E_{\samp,X}[\sum_j \bet_j(X) ( 1 - \bet_j(X))]
  +  L\E_{\samp,X}[\dist(X,X_{\pi_1(X)})],
\eeq
where now the expectation is over $S$ and $X$.
Denoting $k'=\argmax_j \bet_j(X)$ and splitting the sum 
, the first term (which does not depend on $\samp$) satisfies
\begin{alignat*}{2}
 \E_{X}[\bet_{k'}(X) ( 1 - &\bet_{k'}(X))]
 + \E_{X}[\sum_{j\neq k'} \bet_j(X) ( 1 - \bet_j(X))] 
\\
& \leq 
 \E_{X}[1-\bet_{k'}(X)]  +  \E_{X}[\sum_{j\neq k'} \bet_j(X)] 
\\
& = 
 2 \E_{X}[1-\bet_{k'}(X)]
 = 
 2 \D(g^*(X) \neq Y).
\end{alignat*}
It remains to bound $\E_{\samp,X}[\dist(X,X_{\pi_1(X)})]$ and
we proceed exactly as in \citet{shai2014}.
Let $\{C_1,\dots,C_N\}$ be an $\eps$-cover of $\X$ of cardinality $N=\mc{N}(\eps,\X,\dist)$.
Given a sample $\samp$, for $x\in C_{i}$ such that $\samp \cap C_i \neq \emptyset$ 
we have $\dist(x,X_{\pi_1(x)})<\e$, while for $x \in C_{i}$ such that $\samp \cap C_i = \emptyset$ we have $\dist(x,X_{\pi_1(x)}) \leq \diam(\X) =1$,
thus $\E_{\samp,X}[\dist(X,X_{\pi_1(X)})]$ is bounded above by 
\beq
& \leq &\E_{\samp} 
    \left[
        \sum_{i=1}^N  \D(C_i) 
            \lp
                 \e\, \chr_{\{\samp \cap C_i \neq \emptyset\}} 
                 + \chr_{\{\samp \cap C_i = \emptyset\}} 
            \rp
    \right]\\
& = &
    \sum_{i=1}^N  \D(C_i) 
            \lp
                 \e\, \E_{\samp}\left[\chr_{\{\samp \cap C_i \neq \emptyset\}}\right] 
                 + \E_{\samp}\left[\chr_{\{\samp \cap C_i = \emptyset\}}\right] 
            \rp.
\eeq
Since $\D(C_i) \E_{\samp}[\chr_{\samp \cap C_i = \emptyset}] = \D(C_i) (1-\D(C_i))^n \leq 1/en$ and 
$N=\mc{N}(\eps,\X,\dist)$
we get
\beq
\E_{\samp,X}[\dist(X,X_{\pi_1(X)})]
&\leq& 
\eps + \frac{ \mc{N}(\eps,\X,\dist) }{en}\\
&\leq& 
\eps + \frac{1}{en} \paren{\frac{2}{\eps}}^{D}.
\eeq%
Setting $\eps = 2n^{-\frac{1}{D+1}}$ 
concludes the proof.
\end{proof}
\section{Rademacher analysis proofs}
\label{app:radem_proof}
\begin{proof}[Proof of inequality (\ref{eq:radem})]
Dudley's chaining integral \citep{Dudley1967290} bounds from above the Rademacher complexity ${\radem}_n(\TPhF_L)$ by
\beq
 \inf_{\a > 0}
\lp
4\a + 12\int_{\a}^\infty \sqrt{\frac{\log \mc{N}(t,\TPhF_L, \norm{\cdot}_\infty)}{n}} dt
\rp.
\eeq
By Lemma~\ref{lem:covNum} the integral can be bounded as follows:
\beq
& &\int_{\a}^\infty \sqrt{\frac{\log \mc{N}(t,\TPhF_L, \norm{\cdot}_\infty)}{n}} dt %
\\
&  \leq  &
\int_{\a}^\infty \sqrt{\frac{1}{n}\lp\frac{16L}{t}\rp^D \log \lp \frac{5k}{t} \rp}dt
\\
& \leq &
\int_{\a}^\infty \sqrt{\frac{\log 5k}{n}\lp\frac{16L}{t}\rp^D  \lp   \frac{1}{t} \rp}dt
\\
& = &
\sqrt{\frac{\log 5k}{n}}\lp {16L}\rp^{D/2} \int_{\a}^\infty  \lp \frac{1}{t} \rp^{(D+1)/2}dt
\\
& = &
\sqrt{\frac{\log 5k}{n}}\lp {16L}\rp^{D/2} \lp\frac{2}{D-1}\rp\lp\frac{1}{\a^{(D-1)/2}} \rp,
\eeq
where in the second inequality we used the fact that for $ x \in (0,1] $ and $c \geq e$ we have
\(
\log(\frac{c}{x}) \leq \frac{\log c}{x}.
\)
Choosing
\[
\a^*= \lp {9} (16L)^D  \frac{\log 5k}{n} \rp^{1/(D+1)}
\]
yields the bound.
\end{proof}
\begin{proof}[Proof of Theorem \ref{thm:radem}]
An adaptation\footnote{essentially setting $\a=1$ in \citet{mohri-book2012} and doing the stratification on $L$ instead} of \citet[Theorem 4.5]{mohri-book2012} to $\TPhF_L$ states 
that with probability $1-\delta$, for all $L>0$, $h\in \TPhF_L$,
\beq
\E[\loss\margin(h)]
& \leq & \wh{\E}[\loss\margin({h})]
 + 4\radem_n(\TPhF_L) \\
& + &   
\sqrt{\pl{\frac{\log\log_2 {2L}}{n} }}
+ \sqrt{\frac{\log{\frac{2}{\delta}}}{2n}}
.
\eeq
Since $\pred{u < 0} \leq \loss\margin(u)$ we have $P( g_h(X) \neq Y ) \leq \E[\loss\margin(h)]$. 
Since $\loss\margin(u) \leq \loss\cutoff(u)$ we can replace $\loss\margin$
in the empirical loss by the loss function $\loss\cutoff$.
Bounding
$\radem_n(\TPhF_L)$ using (\ref{eq:radem}) concludes the proof.
\end{proof}

\section{Scale sensitive analysis proof}
\label{app:scale_proof}
 \begin{proof}[Proof of Theorem~\ref{thm:scale}]
An application\footnote{setting $\gamma =1$ in \citet[Theorem 1]{MR2745296} and doing the 
stratification on $L$ instead} of \citet[Theorem 1]{MR2745296} states that 
with probability $1-\delta$, for all $L>0$, $h\in \TPhF_L$,
\beq
P( g_h(X) \neq Y )
 \leq  \oo{n}\sum_{i=1}^n \chr_{\{h(X_i,Y_i) < 1\}}
\hspace{2.5cm} \\
 + \quad \sqrt{
 \frac{2 }{n}
 \lp
2\log \mc{N}
\lp
1/4,\TPhF_L, \norm{\cdot}_\infty
\rp
 + \ln \lp \frac{2 L}{ \delta}
\rp
\rp}
+\oo{n}.
\eeq
Applying the metric entropy bound in Lemma \ref{lem:covNum} proves the Theorem.
\end{proof}

\section{Approximate NN proofs}
\label{app:app}
First, we will show that $\wt{h}$ is indeed a $2\eta$ additive perturbation of $h$, i.e. 
\beqn
\label{eq:app_h_sup}
\norm{h - \wt{h}}_{\infty} \leq 2\eta.
\eeqn
Instead of working directly with (\ref{eq:lip_ext}) we consider the following $L$-Lipschitz extension
\beq
h(x,y) &=& \oo{2}\trun{\mhyphen1}{1} 
 \lp
    \min_{\samp_1}       
         \left\{
            \xi(Y_i,y)  + L\dist(X_i,x)
    \right\}
\rp\\
&+& \oo{2}\trun{\mhyphen1}{1} 
 \lp
    \max_{\samp_1}       
         \left\{
            \xi(Y_i,y)  - L\dist(X_i,x)
    \right\}
\rp,
\eeq
easily seen to induce the same classifier $g_h$ as (\ref{eq:lip_ext}).
Consider the first term (the second term is treated similarly)
and its approximate version:
\beq
\wt{h}(x,y) &=& \trun{\mhyphen1}{1} 
 \lp
    \min_{S_1}       
         \left\{
            \xi(Y_i,y) + {L}\wt{\dist}(X_i,x)
    \right\}
\rp,
\eeq
where $\dist \leq \wt{d} \leq (1+\eta){d}$, given in (\ref{eq:app_metric}), is the approximate "`distance"' as provided by the approximate nearest neighbor.
For notational convenience, denote
\beq
h(x,y) &=& \trun{\mhyphen1}{1} (\min_{i} {q}_i (x,y))\\
\wt{h}(x,y) &=& \trun{\mhyphen1}{1} (\min_{i} \wt{q}_i (x,y))\\
{q}_i (x,y) &=& h_i(y) + r_i(x)\\
\wt{q}_i (x,y) &=& \wt{h}_i(y) + \wt{r}_i(x),
\eeq
where $h_i(y) 
= \xi(Y_i,y)$,
$r_i(x) = L\dist(X_i,x)$, 
and
$\wt{h}_i$, $\wt{r}_{i}$
defined analogously.

Observe that if $\wt{r}_i(x) > 2$ then ${r}_i(x) > 2/(1+\eta)\geq 2(1-\eta)$.
In this case, since $h$ has range in $[-1,1]$,
the eventual application of truncation operator $\tmoo$
will force $\wt{h}(x,y) - {h}(x,y)\leq 2\eta$.
Hence, we may assume that $\wt{r}_i(x) \leq 2$ and so ${r}_i(x) \leq 2$.
It is straightforward to verify that
for $a,b\in \R^n$ with $\max_{i\in[n]}\abs{a_i-b_i} \leq \eta$, we have
\beq
\abs{\tmoo(\min_{i} a_i)- \tmoo(\min_{i} b_i)} \leq \eta.
\eeq
Thus, establishing $\abs{q_i(x,y) - \wt{q}_i(x,y)} \leq 2\eta$ for all $i\in[|\samp_1|]$ and $y\in\Y$ 
with $\wt{r}_i(x), r_i(x) \leq 2$ suffices to prove the claim. Indeed,
by (\ref{eq:app_metric}) we have
\beq
\abs{r_i(x) - \wt{r}_i(x)} \leq \abs{r_i(x) - (1+\eta){r}_i(x)} \leq 2 \eta.
\eeq

\begin{proof}[Proof of Lemma \ref{lem:app_cov_num}.]
Suppose $\wt{h}\in \TPhF_{L,\eta}$. 
By the definition of $\TPhF_{L,\eta}$, there exists an 
$h\in\TPhF_L$ such that $\tsnrm{\wt{h} - h}_{\infty}\leq\eta$. 
Let ${h}'$ be some element in a minimal 
$\eps$-cover of $\TPhF_L$ so that $\tsnrm{{h} - h'}_{\infty}\leq\eps$. Then
\[
\tsnrm{\wt{h} - {h}'}_{\infty} \leq 
\tsnrm{\wt{h} - h}_{\infty} 
+ \tsnrm{h - {h}'}_{\infty} \leq 
\eps + \eta.
\]
Hence,
\[
\mc{N}(\eps + \eta,\TPhF_{L,\eta}, \norm{\cdot}_\infty) \leq 
\mc{N}(\eps ,\TPhF_L, \norm{\cdot}_\infty) ,
\]
whence the claim follows.
\end{proof}

\section{Dimensionality reduction proof}
\label{app:dim-red-proof}
\begin{proof}[Proof of Theorem~\ref{thm:rad-dim-red}]
Put $\tilde S=(\tilde X,Y)$.
For $X_i\in X$ and $\tilde X_i\in\tilde X$, define $\delta_i(h)=h(X_i,Y_i)-h(\tilde X_i,Y_i)$.
Then
\begin{alignat*}{2}
\wh{\radem}_n(\TPhF_L;S) & = 
\E\sqprn{\left.\sup_{h\in\TPhF_L} \oo{n}\sum_{i=1}^n \sigma_i h(X_i,Y_i)\,\right\vert S\,}
\\
& =
\E\sqprn{\left.\sup_{h\in\TPhF_L} \oo{n}\sum_{i=1}^n \sigma_i \paren{h(\tilde X_i,Y_i)-\delta_i(h)}
\,\right\vert S\,}\\
&\le
\wh{\radem}_n(\TPhF_L;\tilde S) 
+
\E\sqprn{\left.\sup_{h\in\TPhF_L} \oo{n}\sum_{i=1}^n \sigma_i \delta_i(h)
\,\right\vert S\,}.
\end{alignat*}
By (\ref{eq:radem}), we have
\beqn
\label{eq:rade-tilde}
{\radem}_n(\TPhF_L;\tilde S) 
\leq
2 L \paren{ \frac{\log 5k}{n} }^{1/(\beta+1)}.
\eeqn
Since by construction $h$ is $L$-Lipschitz in its first argument,
we have
\beqn
\label{eq:holder}
\abs{\sum_{i=1}^n\sigma_i\delta_i(h)} \le
\sum_{i=1}^n\abs{\delta_i(h)} \le
L\sum_{i=1}^n\dist(X_i,\tilde X_i)\le L\alpha.
\eeqn
Our claimed bound follows from (\ref{eq:rade-tilde}) and (\ref{eq:holder}).
\end{proof}


\begin{thebibliography}{31}
\providecommand{\natexlab}[1]{#1}
\providecommand{\url}[1]{\texttt{#1}}
\expandafter\ifx\csname urlstyle\endcsname\relax
  \providecommand{\doi}[1]{doi: #1}\else
  \providecommand{\doi}{doi: \begingroup \urlstyle{rm}\Url}\fi

\bibitem[Allwein et~al.(2001)Allwein, Schapire, and
  Singer]{allwein2001reducing}
Allwein, Erin~L, Schapire, Robert~E, and Singer, Yoram.
\newblock Reducing multiclass to binary: A unifying approach for margin
  classifiers.
\newblock \emph{JMLR}, 1:\penalty0
  113--141, 2001.

\bibitem[Andoni \& Krauthgamer(2010)Andoni and Krauthgamer]{AK10}
Andoni, A. and Krauthgamer, R.
\newblock The computational hardness of estimating edit distance.
\newblock \emph{SICOMP}, 39\penalty0 (6):\penalty0 2398-2429, 2010.

\bibitem[Anthony \& Bartlett(1999)Anthony and Bartlett]{MR1741038}
Anthony, M. and Bartlett, P.
\newblock \emph{Neural network learning: theoretical foundations}.
\newblock Cambridge University Press, 1999.



\bibitem[Bartlett \& Mendelson(2002)Bartlett and
  Mendelson]{DBLP:journals/jmlr/BartlettM02}
Bartlett, P. and Mendelson, S.
\newblock Rademacher and gaussian complexities: Risk bounds and structural
  results.
\newblock \emph{JMLR}, 3:\penalty0 463-482,
  2002.

\bibitem[Ben-David \& Shalev-Shwartz(2014)Ben-David and
  Shalev-Shwartz]{shai2014}
Ben-David, S. and Shalev-Shwartz, S.
\newblock Understanding machine learning: From theory to algorithms.
\newblock Cambridge University Press, 2014.

\bibitem[Ben-David et~al.(1995)Ben-David, Cesa-Bianchi, Haussler, and
  Long]{MR1322634}
Ben-David, S., Cesa-Bianchi, N., Haussler, D., and Long, P.
\newblock Characterizations of learnability for classes of
  {$\{0,\cdots,n\}$}-valued functions.
\newblock \emph{J. Comput. System Sci.}, 50\penalty0 (1):\penalty0 74--86,
  1995.

\bibitem[Beygelzimer et~al.(2009)Beygelzimer, Langford, and
  Ravikumar]{DBLP:conf/alt/BeygelzimerLR09}
Beygelzimer, A., Langford, J., and Ravikumar, P.
\newblock Error-correcting tournaments.
\newblock \emph{ALT}, 2009.

\bibitem[Cole \& Gottlieb(2006)Cole and Gottlieb]{cole2006searching}
Cole, R. and Gottlieb, L.
\newblock Searching dynamic point sets in spaces with bounded doubling
  dimension.
\newblock \emph{STOC}, 2006.

\bibitem[Cortes et~al.(2013)Cortes, Mohri, and Rostamizadeh]{icml2013_cortes13}
Cortes, C., Mohri, M., and Rostamizadeh, A.
\newblock Multi-class classification with maximum margin multiple kernel.
\newblock {ICML}, 2013.

\bibitem[Cover \& Hart(1967)Cover and Hart]{cover1967nearest}
Cover, T. and Hart, P.
\newblock Nearest neighbor pattern classification.
\newblock \emph{IEEE Trans. Info. Theo., }, 13\penalty0
  (1):\penalty0 21-27, 1967.

\bibitem[Crammer \& Singer(2002{\natexlab{a}})Crammer and
  Singer]{crammer2002algorithmic}
Crammer, K. and Singer, Y.
\newblock On the algorithmic implementation of multiclass kernel-based vector
  machines.
\newblock \emph{JMLR}, 2:\penalty0
  265-292, 2002{\natexlab{a}}.

\bibitem[Crammer \& Singer(2002{\natexlab{b}})Crammer and
  Singer]{crammer2002learnability}
Crammer, K. and Singer, Y.
\newblock On the learnability and design of output codes for multiclass
  problems.
\newblock \emph{Mach. Learn.}, 47\penalty0 (2-3):\penalty0 201-233,
  2002{\natexlab{b}}.

\bibitem[Crammer et~al.(2002)Crammer, Gilad-Bachrach, Navot, and
  Tishby]{crammer2002margin}
Crammer, K., Gilad-Bachrach, R., Navot, A., and Tishby, N.
\newblock Margin analysis of the lvq algorithm.
\newblock \emph{NIPS}, 2002.

\bibitem[Daniely et~al.(2011)Daniely, Sabato, Ben-David, and
  Shalev-Shwartz]{DBLP:journals/jmlr/DanielySBS11}
Daniely, A., Sabato, S., Ben-David, S., and Shalev-Shwartz, S.
\newblock Multiclass learnability and the erm principle.
\newblock \emph{JMLR - Proceedings Track},
  19:\penalty0 207-232, 2011.

\bibitem[Dinur \& Safra(2005)Dinur and Safra]{dinur2005}
Dinur, I. and Safra, S.
\newblock On the hardness of approximating minimum vertex cover.
\newblock \emph{Ann. Math.}, 162\penalty0 (1):\penalty0 439-485, 2005.

\bibitem[Dudley(1967)]{Dudley1967290}
Dudley, R.M.
\newblock The sizes of compact subsets of hilbert space and continuity of
  gaussian processes.
\newblock \emph{J. Func. Anal.}, 1\penalty0 (3):\penalty0 290-330, 1967.

\bibitem[El-Yaniv et~al.(2008)El-Yaniv, Pechyony, and Yom-Tov]{ElYaniv20081954}
El-Yaniv, R., Pechyony, D., and Yom-Tov, E.
\newblock Better multiclass classification via a margin-optimized single binary
  problem.
\newblock \emph{Patt. Rec. Lett.}, 29\penalty0 (14):\penalty0 1954-1959, 2008.

\bibitem[Enflo(1969)]{Enflo69}
Enflo, P.
\newblock On the nonexistence of uniform homeomorphisms between
  ${L}\sb{p}$-spaces.
\newblock \emph{Ark. Mat.}, 8:\penalty0 103-105, 1969.

\bibitem[Gottlieb et~al.(2010)Gottlieb, Kontorovich, and
  Krauthgamer]{DBLP:conf/colt/GottliebKK10}
Gottlieb, L., Kontorovich, A., and Krauthgamer, R.
\newblock Efficient classification for metric data.
\newblock \emph{COLT}, 2010.

\bibitem[Gottlieb et~al.(2013{\natexlab{a}})Gottlieb, Kontorovich, and
  Krauthgamer]{DBLP:conf/simbad/GottliebKK13}
Gottlieb, L., Kontorovich, A., and Krauthgamer, R.
\newblock Efficient regression in metric spaces via approximate lipschitz
  extension.
\newblock \emph{SIMBAD}, 2013{\natexlab{a}}.

\bibitem[Gottlieb et~al.(2013{\natexlab{b}})Gottlieb, Kontorovich, and
  Krauthgamer]{gkr2013-alt}
Gottlieb, L., Kontorovich, A., and Krauthgamer, R.
\newblock Adaptive metric dimensionality reduction.
\newblock \emph{ALT}, 2013{\natexlab{b}}.

\bibitem[Guermeur(2007)]{MR2383568}
Guermeur, Y.
\newblock V{C} theory of large margin multi-category classifiers.
\newblock \emph{JMLR}, 8:\penalty0 2551-2594, 2007.

\bibitem[Guermeur(2010)]{MR2745296}
Guermeur, Y.
\newblock Sample complexity of classifiers taking values in {$\Bbb R^Q$},
  application to multi-class {SVM}s.
\newblock \emph{Comm. Statist. Theory Methods}, 39\penalty0 (3):\penalty0
  543-557, 2010.

\bibitem[Har-Peled \& Mendel(2006)Har-Peled and Mendel]{har2006fast}
Har-Peled, S. and Mendel, M.
\newblock Fast construction of nets in low-dimensional metrics and their
  applications.
\newblock \emph{SIAM J. Comp.}, 35\penalty0 (5):\penalty0
  1148-1184, 2006.

\bibitem[Kolmogorov \& Tikhomirov(1959)Kolmogorov and
  Tikhomirov]{kolmogorov1959varepsilon}
Kolmogorov, A. and Tikhomirov, V.
\newblock $\varepsilon$-entropy and $\varepsilon$-capacity of sets in function
  spaces.
\newblock \emph{Uspekhi Matematicheskikh Nauk}, 14\penalty0 (2):\penalty0
  3-86, 1959.

\bibitem[Koltchinskii \& Panchenko(2002)Koltchinskii and Panchenko]{MR1892654}
Koltchinskii, V. and Panchenko, D.
\newblock Empirical margin distributions and bounding the generalization error
  of combined classifiers.
\newblock \emph{Ann. Statist.}, 30\penalty0 (1):\penalty0 1-50, 2002.

\bibitem[Krauthgamer \& Lee(2004)Krauthgamer and Lee]{KL04}
Krauthgamer, R. and Lee, J.
\newblock Navigating nets: {S}imple algorithms for proximity search.
\newblock \emph{SODA}, 2004.

\bibitem[Langford \& Beygelzimer(2005)Langford and
  Beygelzimer]{DBLP:conf/colt/LangfordB05}
Langford, J. and Beygelzimer, A.
\newblock Sensitive error correcting output codes.
\newblock \emph{COLT}, 2005.

\bibitem[McShane(1934)]{MR1562984}
McShane, E.~J.
\newblock Extension of range of functions.
\newblock \emph{Bull. Amer. Math. Soc.}, 40\penalty0 (12):\penalty0 837-842,
  1934.

\bibitem[Mohri et~al.(2012)Mohri, Rostamizadeh, and Talwalkar]{mohri-book2012}
Mohri, M., Rostamizadeh, A., and Talwalkar, A.
\newblock \emph{Foundations Of Machine Learning}.
\newblock The MIT Press, 2012.


\bibitem[Naor \& Schechtman(2007)Naor and Schechtman]{NS07}
Naor, A. and Schechtman, G.
\newblock Planar earthmover is not in $l_1$.
\newblock \emph{SICOMP}, 37:\penalty0 804-826, June 2007.

\bibitem[Papadimitriou \& Steiglitz(1998)Papadimitriou and
  Steiglitz]{papadimitriou1982combinatorial}
Papadimitriou, C. and Steiglitz, K.
\newblock \emph{Combinatorial optimization : algorithms and complexity}.
\newblock Prentice Hall, 1998.


\bibitem[Rifkin \& Klautau(2004)Rifkin and
  Klautau]{DBLP:journals/jmlr/RifkinK03}
Rifkin, R. and Klautau, A.
\newblock In defense of one-vs-all classification.
\newblock \emph{JMLR}, 5:\penalty0 101-141,
  2004.

\bibitem[von Luxburg \& Bousquet(2004)von Luxburg and
  Bousquet]{DBLP:journals/jmlr/LuxburgB04}
von Luxburg, U. and Bousquet, O.
\newblock Distance-based classification with lipschitz functions.
\newblock \emph{JMLR}, 5:\penalty0 669-695,
  2004.

\bibitem[Weston \& Watkins(1999)Weston and Watkins]{weston1999support}
Weston, J. and Watkins, C.
\newblock Support vector machines for multi-class pattern recognition.
\newblock \emph{ESANN} 99, 61-72, 1999.

\bibitem[Whitney(1934)]{Whitney1934}
Whitney, H.
\newblock Analytic extensions of differentiable functions defined in closed
  sets.
\newblock \emph{Trans. Amer. Math. Soc.}, 36\penalty0
  (1):\penalty0 63-89, 1934.

\bibitem[Zhang(2002)]{zhang2002covering}
Zhang, T.
\newblock Covering number bounds of certain regularized linear function
  classes.
\newblock \emph{JMLR}, 2:\penalty0
  527-550, 2002.

\bibitem[Zhang(2004)]{zhang2004statistical}
Zhang, T.
\newblock Statistical analysis of some multi-category large margin
  classification methods.
\newblock \emph{JMLR}, 5:\penalty0
  1225-1251, 2004.

\end{thebibliography}
\end{document}